\documentclass[12pt,a4paper]{article}
\pdfoutput=1

\usepackage[margin=25mm]{geometry}
\usepackage{amsmath,amssymb,mathtools}
\usepackage{bm}
\DeclareMathOperator*{\median}{median}

\usepackage{graphicx}
\usepackage{booktabs}
\usepackage{caption}
\usepackage{subcaption}
\usepackage{algorithm}
\usepackage{algorithmic}
\usepackage{amsthm}
\usepackage{aliascnt}
\usepackage{authblk}

\usepackage{newtxtext}
\usepackage{newtxmath}

\usepackage{hyperref}
\hypersetup{
  colorlinks=true,
  linkcolor=blue,
  citecolor=blue,
  urlcolor=blue
}
\usepackage{cite}
\usepackage{cleveref}
\providecommand{\orcidlink}[1]{\href{https://orcid.org/#1}{[ORCID]}}
\IfFileExists{orcidlink.sty}{\let\orcidlink\relax\usepackage{orcidlink}}{}

\makeatletter
\def\@fnsymbol#1{\ifcase#1\or 1\or 2\or *\or \textdagger\or
  \textdaggerdbl\or \textsection\or \textparagraph\or \|\or
  **\or \textdagger\textdagger\or \textdaggerdbl\textdaggerdbl\else
  \@ctrerr\fi\relax}
\makeatother

\DeclareMathOperator*{\argmin}{arg\,min}

\providecommand{\keywords}[1]{\medskip\noindent\textbf{Keywords:} #1}

\theoremstyle{plain}
\newtheorem{theorem}{Theorem}[section]

\newaliascnt{proposition}{theorem}
\newtheorem{proposition}[proposition]{Proposition}
\aliascntresetthe{proposition}

\newaliascnt{corollary}{theorem}
\newtheorem{corollary}[corollary]{Corollary}
\aliascntresetthe{corollary}

\newaliascnt{lemma}{theorem}
\newtheorem{lemma}[lemma]{Lemma}
\aliascntresetthe{lemma}

\crefname{theorem}{Theorem}{Theorems}
\Crefname{theorem}{Theorem}{Theorems}
\crefname{proposition}{Proposition}{Propositions}
\Crefname{proposition}{Proposition}{Propositions}
\crefname{lemma}{Lemma}{Lemmas}
\Crefname{lemma}{Lemma}{Lemmas}
\crefname{corollary}{Corollary}{Corollaries}
\Crefname{corollary}{Corollary}{Corollaries}

\title{
  Analytical Evaluation of DCA Convergence Properties for Minimizing Prediction Functions of Gaussian RBF Support Vector Regression
}
\author[1]{Yohei Kakimoto}
\author[1]{Yuto Omae}
\author[2]{Hirotaka Takahashi}
\affil[1]{Nihon University}
\affil[2]{Tokyo City University}

\date{}

\begin{document}

\maketitle

\begin{abstract}
For nonconvex optimization problems whose objective is the prediction function of a trained Support Vector Regression (SVR) model with the Gaussian radial basis function (RBF) kernel (RBF-SVR), we present a framework that applies the difference of convex functions (DC) algorithm (DCA) by exploiting the analytical structure of the RBF kernel to construct an explicit DC decomposition.
Specifically, we derive in closed form both the lower bound $\mu$ of the strong convexity parameter of the DC components and the upper bound $L$ of the gradient Lipschitz constant of the subproblem.
Both $\mu$ and $L$ are determined solely by the post-training dual-coefficient sum $C_{\alpha}$ and the RBF kernel parameter $\gamma$, together with the DC decomposition parameter $\rho$, and they share a common leading term $C_{\alpha}\rho$.
Through numerical experiments on six benchmark functions, we show that $C_{\alpha}\rho$ is the primary single quantity characterizing both the convergence properties and the initial-point dependence of DCA, and further demonstrate that it decomposes into two independent pathways, $C \to C_{\alpha}$ and $\gamma \to \rho$, with its primary variation governed by the SVR hyperparameters $(C, \gamma)$.
Together, these results allow the convergence properties of DCA on RBF-SVR to be assessed in advance through the single scalar quantity $C_{\alpha}\rho$: approximately from $(C, \gamma)$ before training, and exactly in closed form after training.
\end{abstract}

\keywords{DC algorithm, DC decomposition, Gaussian RBF kernel, Support Vector Regression,
  nonconvex optimization, surrogate model}

%%%%%%%%%%%%%%%%%%%%%%%%%%%%%%%%%%%%%%%%%%
\section{Introduction}\label{sec:introduction}
In formulating an optimization problem, the choice of objective function substantially affects the quality of the solution and the computational efficiency.
Much of the conventional optimization research has assumed analytically given objective functions, such as linear or convex quadratic functions.
However, in real-world problems, the objective function can be difficult to define analytically.
When the optimization target is a response that can only be evaluated through simulation, experiment, or observation, expressing the objective function in closed form is generally difficult.
In such cases, a common approach is to construct a regression model from observed data using machine learning, and to optimize the resulting prediction function as a surrogate for the objective function~\cite{Simpson2001,Queipo2005,WangShan2007,Forrester2009,ShanWang2010,Viana2014}.

In general, the choice of regression model determines the class of the optimization problem.
For example, with a linear model, the objective function is linear and the optimization is tractable, but the model's ability to approximate complex phenomena is limited.
On the other hand, with a nonlinear regression model, the approximation accuracy improves, but the prediction function is nonlinear, and the optimization problem becomes a nonlinear program.
A variety of nonlinear regression models are used as surrogates, including neural networks, Gaussian process regression, decision-tree-based methods, and kernel-based methods~\cite{Hornik1989,Sacks1989,Vapnik2000,Friedman2001,Smola2004,Chen2016,Raissi2019}.
While these models offer high approximation capability, their prediction functions are generally nonconvex, and the resulting optimization problem belongs to an even more computationally challenging class.

Among these, we focus on Support Vector Regression (SVR)~\cite{Smola2004} with the Gaussian radial basis function (RBF) kernel.
Hereinafter, we refer to SVR with the Gaussian RBF kernel simply as RBF-SVR.
RBF-SVR has flexible approximation capability and a strong theoretical foundation, and is widely used as a surrogate~\cite{ShanWang2010,Viana2014}.
Furthermore, the prediction function is expressed explicitly as a linear combination of kernel functions, allowing its algebraic structure to be treated analytically.

Traditionally, when optimizing the RBF-SVR prediction function as an objective, metaheuristics such as genetic algorithms and particle swarm optimization have been widely used.
These methods treat the prediction function as a black box, and therefore do not exploit the algebraic structure of the RBF-SVR prediction function.
In addition, since they are inherently approximate, convergence even to a local optimum is generally not guaranteed.
On the other hand, recent years have seen progress in optimization methods that explicitly exploit the difference of convex functions (DC) structure of the prediction functions of trained machine learning models~\cite{Awasthi2024,Maskan2025,Pokutta2025}.
The DC algorithm (DCA), based on DC programming~\cite{PhamDinh1997,PhamDinh1998,LeThi2018}, is a local optimization method that searches for an optimal solution by expressing a nonconvex function as the difference of two convex functions and iteratively solving the resulting convex subproblems.
Since the subproblems are convex, convex constraints can also be handled naturally.
More recently, methods based on the Frank-Wolfe method have been proposed for DC optimization, making it possible to handle constrained DC optimization problems at scale~\cite{Maskan2025,Pokutta2025}.
Hereinafter, we refer to these Frank-Wolfe-based DC optimization methods collectively as DC-FW.
Since DC-FW takes the smoothness constant as a given input in its convergence analysis, determining this constant for the specific target function is an important practical challenge.

Motivated by these considerations, in this paper we construct a DC decomposition for the RBF-SVR prediction function and analyze the structure of the resulting DCA subproblem.
The Hessian of the RBF kernel can be expressed as the sum of a scalar multiple of the identity matrix and a rank-one matrix, and its eigenvalues can be bounded in closed form via the Rayleigh quotient.
Through this structure, the parameters of the DC decomposition, along with the strong convexity and smoothness parameters of the DCA subproblem, are analytically determined from the kernel parameters and the dual coefficients obtained after training the SVR, before the DCA iterations begin.
Furthermore, since the primary variation of these parameters is governed by the SVR hyperparameters such as $C$ and $\gamma$, they can also be approximately estimated at the design stage before training.
These quantities provide analytical reference values for the smoothness constant required by DC-FW~\cite{Maskan2025,Pokutta2025} when it is applied to the minimization of the RBF-SVR prediction function.

The contributions of this paper are as follows.
First, we explicitly derive a DC decomposition for the RBF-SVR prediction function, present in closed form the convexity condition on the DC decomposition parameter $\rho$, and provide the corresponding DCA formulation.
Second, we show that the lower bound $\mu$ of the strong convexity parameter and the upper bound $L$ of the gradient Lipschitz constant of the DCA subproblem are analytically determined in closed form from the SVR hyperparameters and the post-training dual coefficients.
Specifically, both $\mu$ and $L$ are determined solely by the sum of SVR dual coefficients $C_{\alpha}$, the RBF kernel parameter $\gamma$, and the DC decomposition parameter $\rho$, and they share the common leading term $C_{\alpha}\rho$.
In particular, $L$ provides an analytical reference value for the smoothness constant required by DC-FW~\cite{Maskan2025,Pokutta2025} when it is applied to the minimization of the RBF-SVR prediction function.
Third, we show through numerical experiments on six benchmark functions that $C_{\alpha}\rho$ is the primary single quantity characterizing both the convergence properties and the initial-point dependence of DCA.
Fourth, we experimentally show that $C_{\alpha}\rho$ decomposes into two independent pathways, $C \to C_{\alpha}$ and $\gamma \to \rho$, and that its primary variation is governed by the SVR hyperparameters $(C, \gamma)$.
Through these analyses, we reveal a two-level structure: the convergence properties of DCA are summarized by the single quantity $C_{\alpha}\rho$, whose primary variation is in turn governed by the pre-training hyperparameters.
The novelty of this paper lies in directly exploiting the algebraic structure of the RBF-SVR prediction function, thereby filling open gaps in two lines of prior research: optimization using SVR as a surrogate and DC applications to trained machine learning models.
Detailed comparisons with prior work are presented in~\cref{sec:related-work}.

Our results make it possible to estimate the convergence properties of DCA through the choice of hyperparameters, at the design stage before training the RBF-SVR.
Furthermore, for a trained RBF-SVR model, the convergence properties can be analytically evaluated before the DCA iterations begin.
These results position this work as an analytical complement, for a specific target function, to the line of research on optimization that exploits DC structure.

%%%%%%%%%%%%%%%%%%%%%%%%%%%%%%%%%%%%%%%%%%
\section{Related Work}\label{sec:related-work}
Optimization studies that use SVR as a surrogate have been reported across various fields of engineering design~\cite{Nakayama2003,Yun2009,Pan2010,Zhu2012,Andres2012,Lee2008,Zhai2023}.
In particular, Nakayama et al. have systematically developed a framework for sequential approximate multi-objective optimization that uses RBF-SVR as a surrogate~\cite{Nakayama2003,Yun2009,Nakayama2009book}.
However, these studies treat the surrogate as a black box and solve it using general-purpose metaheuristics or nonlinear programming solvers.
Consequently, the algebraic structure of the RBF-SVR prediction function is not exploited.
We also note that there are frameworks, such as Bayesian optimization~\cite{Jones1998,Shahriari2016}, that optimize an acquisition function incorporating both the surrogate's prediction and its uncertainty, rather than the prediction alone.
By contrast, this paper directly optimizes the trained RBF-SVR prediction function itself rather than an acquisition function, and exploits its algebraic structure through DC decomposition.

DC programming is an optimization framework that expresses a nonconvex function as the difference of two convex functions, and has been systematically developed by Pham Dinh and Le Thi since the 1980s~\cite{PhamDinh1997,LeThi2018}.
The DCA is an iterative method based on this framework that constructs a convergent sequence for a nonconvex problem by solving a convex subproblem at each iteration.
DC programming has wide applications in machine learning, sparse optimization, finance, and other domains, and it is known that many nonconvex optimization problems can be addressed through DC decomposition~\cite{Collobert2006,LeThi2009,Gotoh2018}.

Recent years have seen progress in research that optimizes the prediction functions of trained machine learning models through DC structure.
For multilayer perceptrons and convolutional neural networks with ReLU activations, Awasthi et al. explicitly constructed a layer-wise inductive DC decomposition of the prediction function by decomposing the weight matrices into positive and negative components and expressing ReLU as $\max(\tilde{f}, \tilde{g}) - \tilde{g}$~\cite{Awasthi2024}.
This demonstrates that the computation of adversarial attacks and the approximate optimization of complex functions through trained networks can both be addressed within the DCA framework.
However, their construction fundamentally relies on the piecewise linearity of ReLU, and a direct extension to smooth surrogate models such as the RBF kernel is not straightforward.
Moreover, quantitative metrics that govern the DCA convergence error, such as the strong convexity parameter and Lipschitz constants, are not analyzed.

There is also a line of research on computational methods for DCA that do not depend on a specific target function.
For $L$-smooth nonconvex functions, Maskan et al. introduced a quadratic DC decomposition with $\frac{L}{2}\|\bm{x}\|^2$ as an auxiliary term and proposed DC-FW, which solves the DCA subproblem via the Frank-Wolfe method~\cite{Maskan2025}.
They noted that the choice of DC decomposition affects both the geometry of the subproblem and the quality of the stationary points reached, and showed that an appropriate choice of decomposition can improve computational efficiency.
Pokutta~\cite{Pokutta2025} extended this framework with further computational refinements, combining Blended Pairwise Conditional Gradients with warm-starting and adaptive error bounds to construct an advanced DC-FW capable of handling constrained DC optimization problems at scale.
However, DC-FW treats the smoothness constant $L$ as given input and does not address how to choose $L$ for a specific target function.

This paper focuses on the RBF-SVR prediction function as a specific target function and conducts an analysis that exploits its algebraic structure.
Specifically, we exploit the analytical structure of the RBF kernel to provide a closed-form lower bound on the DC decomposition parameter $\rho$ for individual kernel functions, and construct a DC decomposition for the RBF-SVR prediction function through component decomposition by the sign of the dual coefficients.
Furthermore, we provide in closed form both the lower bound $\mu$ of the strong convexity parameter of the DC components and the upper bound $L$ of the gradient Lipschitz constant of the subproblem.
Both $\mu$ and $L$ share a common leading term $C_{\alpha}\rho$, which can be computed in advance from the SVR hyperparameters and the post-training dual coefficients.
Consequently, for a trained RBF-SVR model, the convergence properties of DCA can be analytically evaluated before the iterations begin.
Moreover, we show that $C_{\alpha}\rho$ can be estimated through the choice of hyperparameters even before training the SVR, thereby enabling approximate evaluation of the DCA convergence properties at the pre-training stage.
In addition, the gradient Lipschitz upper bound $L$ provides an analytical reference value for the smoothness constant when DC-FW~\cite{Maskan2025,Pokutta2025} is applied to the minimization of the RBF-SVR prediction function, thereby positioning this work as an analytical complement to these methods for a specific target function.

%%%%%%%%%%%%%%%%%%%%%%%%%%%%%%%%%%%%%%%%%%
\section{Preliminaries}\label{sec:preliminaries}
%%%%%%%%%%%%%%%%%%%%%%%%%%%%%%%%%%%%%%%%%%
\subsection{SVR Formulation and Prediction Function}\label{sec:svr}
Given training samples $\{(\bm{x}_i, y_i)\}_{i=1}^n$ with $\bm{x}_i \in \mathbb{R}^d$ and $y_i \in \mathbb{R}$, SVR determines the parameters $\bm{w}$ and $b$ of the prediction function
\begin{align*}
  \hat{f}(\bm{x}) = \langle \bm{w}, \bm{x} \rangle + b
\end{align*}
by introducing slack variables $\xi_i, \xi_i^{*}$ and solving the optimization problem
\begin{align*}
  &\frac{1}{2} \lVert \bm{w} \rVert^2 + C \sum_{i=1}^{n} (\xi_i + \xi_i^{*}) \rightarrow \min_{\bm{w}, b, \xi_i, \xi_i^{*}}, \\
  \text{s.t.}\quad &y_i - \hat{f}(\bm{x}_i) \leq \varepsilon + \xi_i, \\
  &\hat{f}(\bm{x}_i) - y_i \leq \varepsilon + \xi_i^{*}, \\
  &\xi_i, \xi_i^{*} \geq 0, \quad \forall i \in \{1, 2, \ldots, n\}.
\end{align*}
Introducing Lagrange multipliers $\alpha_i$ and $\alpha_i^{*}$, we obtain the dual problem with constraints
\begin{align*}
  \sum_{i=1}^{n} (\alpha_i - \alpha_i^{*}) = 0, \quad \alpha_i, \alpha_i^{*} \in [0, C].
\end{align*}
To extend this formulation to nonlinear regression, we introduce a mapping $\varphi: X \to \mathfrak{Z}$ from the input space $X$ to a feature space $\mathfrak{Z}$, together with the kernel function $k(\bm{x}_i, \bm{x}) = \langle \varphi(\bm{x}_i), \varphi(\bm{x}) \rangle$.
The prediction function can then be written as
\begin{align}
  \hat{f}(\bm{x}) = \sum_{i=1}^{n} (\alpha_i - \alpha_i^{*}) k(\bm{x}_i, \bm{x}) + b. \label{eq:svr-reg-kernel_b}
\end{align}
The KKT conditions imply $\alpha_i \alpha_i^{*} = 0$~\cite[Sec.~1.4]{Smola2004}.
In this paper, we consider the standard $\varepsilon$-insensitive SVR and assume $\varepsilon > 0$.
We focus on minimizing the prediction function~\cref{eq:svr-reg-kernel_b} with respect to the input variable $\bm{x}$.
Since $b$ does not depend on $\bm{x}$, we omit $b$ in what follows for notational simplicity and redefine
\begin{align}
  \hat{f}(\bm{x}) = \sum_{i=1}^{n} (\alpha_i - \alpha_i^{*}) k(\bm{x}_i, \bm{x}). \label{eq:svr-reg-kernel}
\end{align}

%%%%%%%%%%%%%%%%%%%%%%%%%%%%%%%
\subsection{DC Decomposition of the SVR Prediction Function}\label{sec:dc-decomposition}
As shown in~\cref{eq:svr-reg-kernel}, the SVR prediction function is expressed as a linear combination of the kernel functions $k(\bm{x}, \bm{x}_i)$.
Treating $\bm{x}_i$ as a constant, we set
\begin{align*}
  \phi_i(\bm{x}) = k(\bm{x}, \bm{x}_i).
\end{align*}
Focusing on the differences of Lagrange multipliers, we further define
\begin{align}
  \Delta \alpha_i & = \alpha_i - \alpha_i^{*},                                \notag\\
  S_{+} & = \{\, i \mid \alpha_i - \alpha_i^* \ge 0 \,\}, \label{eq:S_plus}\\
  S_{-} & = \{\, i \mid \alpha_i - \alpha_i^* < 0 \,\}.  \label{eq:S_minus}
\end{align}
Samples with $\Delta\alpha_i = 0$ belong to $S_+$ by this definition, but the complementarity condition $\alpha_i \alpha_i^{*} = 0$~\cite[Sec.~1.4]{Smola2004} forces $\alpha_i = \alpha_i^* = 0$ in this case, so they make no contribution to $\hat{f}(\bm{x})$ and can effectively be ignored in what follows.
We can then rewrite~\cref{eq:svr-reg-kernel} as
\begin{align}
  \hat{f}(\bm{x}) = \sum_{i\in S_{+}}\Delta \alpha_i \phi_i(\bm{x}) + \sum_{i\in S_{-}}\Delta \alpha_i \phi_i(\bm{x}). \label{eq:svr-reg2-1}
\end{align}
The complementarity condition $\alpha_i \alpha_i^{*} = 0$ further yields $\alpha_i^* = 0$ for $i \in S_+$ and $\alpha_i = 0$ for $i \in S_-$.
Therefore
\begin{align*}
  \Delta\alpha_i =
  \begin{cases}
    \alpha_i       & (i \in S_+), \\
    -\alpha_i^{*}  & (i \in S_-),
  \end{cases}
\end{align*}
and substituting this into~\cref{eq:svr-reg2-1} gives
\begin{align}
  \hat{f}(\bm{x}) = \sum_{i\in S_{+}}\alpha_i \phi_i(\bm{x}) - \sum_{i\in S_{-}}\alpha^*_i \phi_i(\bm{x}). \label{eq:svr-reg2-2}
\end{align}

We now consider a DC decomposition of~\cref{eq:svr-reg2-2}.
The class of DC functions is closed under nonnegative linear combinations~\cite{PhamDinh1997}, 
and we exploit this property to construct a DC decomposition.
First, we define the DC decomposition $\phi_i = g_i - h_i$ of each kernel $\phi_i$ as follows~\cite{PhamDinh1998}:
\begin{align}
  g_i(\bm{x}) & = \frac{\rho}{2} \lVert \bm{x}\rVert^2, \label{eq:g_rbf} \\
  h_i(\bm{x}) & = \frac{\rho}{2} \lVert \bm{x}\rVert^2 - \phi_i(\bm{x}). \label{eq:h_rbf}
\end{align}
Here, $\rho$ is a positive constant; $g_i$ is clearly convex, and $h_i$ is convex for an appropriate choice of $\rho$.
Indeed, by~\cref{lem:kernel_eig}, every $h_i$ is convex when $\rho \geq 4\gamma\exp(-\tfrac{3}{2})$.
Using this decomposition, we can rewrite~\cref{eq:svr-reg2-2} as
\begin{align}
  \hat{f}(\bm{x}) 
   &= \sum_{i\in S_{+}}\alpha_i (g_i(\bm{x}) - h_i(\bm{x})) - \sum_{i\in S_{-}}\alpha^*_i (g_i(\bm{x}) - h_i(\bm{x})) \notag \\
   &= G(\bm{x}) - H(\bm{x}), \label{eq:svr-dc_bias}
\end{align}
where
\begin{align}
  G(\bm{x}) &= \sum_{i\in S_{+}}\alpha_i g_i(\bm{x}) + \sum_{i\in S_{-}}\alpha_i^* h_i(\bm{x}), \label{eq:G} \\
  H(\bm{x}) &= \sum_{i\in S_{+}}\alpha_i h_i(\bm{x}) + \sum_{i\in S_{-}}\alpha_i^* g_i(\bm{x}). \label{eq:H}
\end{align}
Since nonnegative linear combinations of convex functions are convex and $\alpha_i, \alpha_i^* \geq 0$, 
both $G$ and $H$ are convex when $\rho \geq 4\gamma\exp(-\tfrac{3}{2})$.
This yields the DC decomposition $\hat{f} = G - H$.

Substituting~\cref{eq:g_rbf,eq:h_rbf} into~\cref{eq:G,eq:H} yields
\begin{align}
  G(\bm{x}) &= \frac{C_{\alpha}\rho}{2} \lVert \bm{x}\rVert^2
                    - \sum_{i\in S_-} \alpha^*_i \phi_i(\bm{x}), \label{eq:G_explicit} \\
  H(\bm{x}) &= \frac{C_{\alpha}\rho}{2} \lVert \bm{x}\rVert^2
                    - \sum_{i\in S_+} \alpha_i \phi_i(\bm{x}),  \label{eq:H_explicit}
\end{align}
where
\begin{align*}
  C_{\alpha} = \sum_{i\in S_+} \alpha_i + \sum_{i\in S_-} \alpha^*_i.
\end{align*}
Now, splitting the equality constraint of the dual problem
\begin{align*}
  \sum_{i=1}^{n} (\alpha_i -\alpha_i^{*}) = 0,
\end{align*}
according to the definitions of $S_+$ and $S_-$ in~\cref{eq:S_plus,eq:S_minus}, we obtain
\begin{align*}
  \sum_{i\in S_+} (\alpha_i -\alpha_i^{*}) + \sum_{i\in S_-} (\alpha_i -\alpha_i^{*}) = 0.
\end{align*}
Since $\alpha_i, \alpha_i^* \geq 0$ and $\alpha_i \alpha_i^* = 0$, we have $\alpha_i^* = 0$ for $i \in S_+$ and $\alpha_i = 0$ for $i \in S_-$.
Therefore,
\begin{align*}
  \sum_{i\in S_+} \alpha_i - \sum_{i\in S_-} \alpha_i^{*} = 0,
\end{align*}
that is,
\begin{align*}
  \sum_{i\in S_+} \alpha_i = \sum_{i\in S_-} \alpha_i^{*}.
\end{align*}
Combined with $\alpha_i, \alpha_i^* \geq 0$, this yields
\begin{align}
  C_{\alpha} 
    = 2\sum_{i\in S_+}\alpha_i 
    = 2\sum_{i\in S_-}\alpha_i^{*} 
    \geq 0. \label{eq:Calpha_half}
\end{align}
This relation will be used repeatedly from~\cref{sec:convexity} onward.

We have thus established that $\hat{f} = G - H$ is a DC decomposition when $\rho \geq 4\gamma\exp(-\tfrac{3}{2})$.
However, the explicit forms of $G$ and $H$ in~\cref{eq:G_explicit,eq:H_explicit} suggest that the condition $\rho \geq 4\gamma\exp(-\tfrac{3}{2})$ ensuring convexity of each $h_i$ is stronger than necessary, and smaller $\rho$ may still ensure the convexity of $G$ and $H$.
This is because $G$ and $H$ are obtained as linear combinations weighted by the dual coefficients $\alpha_i, \alpha_i^*$, so that the combination can remain convex even when individual $h_i$ are not.
This observation is quantified in~\cref{sec:convexity,sec:convergence}, where we show that the lower bound for $\rho$ can be relaxed to $2\gamma\exp(-\tfrac{3}{2})$.

%%%%%%%%%%%%%%%%%%%%%%%%%%%%%%%
\subsection{Application of DCA and Derivation of the Subproblem}\label{sec:dca-subproblem}
\begin{algorithm}
\caption{DCA for the RBF-SVR Prediction Function}
\label{alg:dca}
\begin{algorithmic}[1]
\REQUIRE Initial point $\bm{x}^{(0)} \in \Omega$; convergence thresholds $\varepsilon_x > 0$, $\varepsilon_f > 0$; DC decomposition parameter $\rho > \rho_{\min}$~(\cref{thm:mu})
\STATE $t \leftarrow 0$
\REPEAT
  \STATE $\bm{s}^{(t)} \leftarrow \nabla H(\bm{x})\big|_{\bm{x}=\bm{x}^{(t)}}$
  \STATE $\bm{x}^{(t+1)} \leftarrow \argmin_{\bm{x} \in \Omega}\, F^{(t)}(\bm{x})$
  \STATE $t \leftarrow t + 1$
\UNTIL{$\lVert \bm{x}^{(t)} - \bm{x}^{(t-1)} \rVert \leq \varepsilon_x$ and $\lvert \hat{f}(\bm{x}^{(t)}) - \hat{f}(\bm{x}^{(t-1)}) \rvert \leq \varepsilon_f$}
\ENSURE $\bm{x}^{(t)}$
\end{algorithmic}
\end{algorithm}

In what follows, we assume $C_\alpha > 0$.
If $C_\alpha = 0$, all dual coefficients vanish, $\hat{f}$ in~\cref{eq:svr-reg-kernel} becomes identically zero, and the optimization problem is trivial.
Also, let $\Omega \subset \mathbb{R}^d$ be a non-empty, bounded, closed, convex set, which we take as the feasible region of the optimization.
All subsequent arguments hold for any $\Omega$ satisfying these conditions.

DCA is an iterative method for minimizing DC-represented functions, generating the next iterate $\bm{x}^{(t+1)}$ from the current iterate $\bm{x}^{(t)}$ until a convergence criterion is satisfied.
The key idea of DCA is to leverage, at each iterate $\bm{x}^{(t)}$, the first-order lower bound on $H(\bm{x})$ given by a subgradient $\bm{s}^{(t)} \in \partial H(\bm{x}^{(t)})$:
\begin{align}
  H(\bm{x}) \geq H(\bm{x}^{(t)}) + \langle \bm{s}^{(t)}, \bm{x} - \bm{x}^{(t)} \rangle. \label{eq:H_lower_bound}
\end{align}
\Cref{eq:H_lower_bound} holds for any $\bm{x}$ by the definition of the subgradient and the convexity of $H$.
Substituting~\cref{eq:H_lower_bound} into~\cref{eq:svr-dc_bias} yields the upper bound
\begin{align}
  \hat{f}(\bm{x}) &= G(\bm{x}) - H(\bm{x}) \notag \\
            &\leq G(\bm{x})- \{H(\bm{x}^{(t)}) + \langle \bm{s}^{(t)}, \bm{x} - \bm{x}^{(t)} \rangle\} \notag \\
            &= G(\bm{x}) - \langle \bm{s}^{(t)}, \bm{x} \rangle - H(\bm{x}^{(t)}) + \langle \bm{s}^{(t)}, \bm{x}^{(t)} \rangle. \label{eq:f_upper_bound}
\end{align}
Now, since the third and fourth terms in~\cref{eq:f_upper_bound} are constants, the upper bound on $\hat{f}(\bm{x})$ can be obtained by minimizing
\begin{align*}
  F^{(t)}(\bm{x}) &= G(\bm{x}) - \langle \bm{s}^{(t)}, \bm{x} \rangle.
\end{align*}
In summary, at the $t$-th iteration, DCA computes the next iterate $\bm{x}^{(t+1)}$ by solving the optimization problem
\begin{align}
  \bm{x}^{(t+1)} &\in \argmin_{\bm{x}\in \Omega} F^{(t)}(\bm{x}). \label{eq:dca_subprob}
\end{align}
The DCA procedure is summarized in~\cref{alg:dca}.

DCA is an iterative method that updates the solution by solving a subproblem formulated as a convex optimization problem at each iteration.
When the subproblem is convex, its global optimum can be computed numerically using standard convex optimization methods.
Accordingly, this paper considers only simple constraints such as bound constraints, leaving complex constraints aside.
The analysis focuses primarily on the structure and behavior of the subproblem arising from the objective function.

%%%%%%%%%%%%%%%%%%%%%%%%%%%%%%%%%%%%%%%%%%
\section{Analysis of the DCA Subproblem for RBF-SVR}\label{sec:dca-analysis}
%%%%%%%%%%%%%%%%%%%%%%%%%%%%%%%%%%%%%%%%%%
In this section, we apply the DCA introduced in~\cref{sec:dca-subproblem} to the RBF-SVR prediction function.
Specifically, we first derive the subproblem explicitly in~\cref{sec:rbf-svr-subproblem} and discuss its geometric structure in~\cref{sec:structure-analysis}.
Next, in~\cref{sec:convexity}, we analytically derive the lower bound on $\rho$ that ensures the subproblem is convex.
In~\cref{sec:convergence}, we derive in closed form the strong convexity parameter $\mu$ of the DC components $G$ and $H$ and the upper bound $L$ on the gradient Lipschitz constant.
Through these analyses, we show that the central quantity $C_\alpha\rho$, which characterizes the convergence properties of DCA, is determined in closed form by the SVR hyperparameters and the learned dual coefficients.

\subsection{Subproblem for the RBF-SVR Prediction Function}\label{sec:rbf-svr-subproblem}
In this paper, we apply DCA to the nonconvex optimization problem of minimizing the RBF-SVR prediction function.
Specifically, we set
\begin{align*}
  \phi_i(\bm{x}) = \exp(-\gamma\lVert \bm{x}-\bm{x}_i \rVert^2)
\end{align*}
and explicitly derive the optimization problem~\cref{eq:dca_subprob} that DCA solves at each step.
Since the convex function $H(\bm{x})$ given in~\cref{eq:H_explicit} is of class $C^{\infty}$, the subgradient coincides with the gradient.
Therefore, by~\cref{eq:H_explicit}, we have
\begin{align*}
  \nabla H(\bm{x}) = C_{\alpha}\rho \bm{x}
    + 2\gamma \sum_{i\in S_+} \alpha_i \phi_i(\bm{x}) (\bm{x}-\bm{x}_i).
\end{align*}
Evaluating this at the iterate $\bm{x}^{(t)}$ yields
\begin{align}
  \bm{s}^{(t)}
    &= \nabla H(\bm{x})\big|_{\bm{x}=\bm{x}^{(t)}} \notag \\
    &= C_{\alpha}\rho \bm{x}^{(t)}
       + 2\gamma \sum_{i\in S_+} \alpha_i \phi_i(\bm{x}^{(t)}) (\bm{x}^{(t)}-\bm{x}_i).
    \label{eq:grad_H}
\end{align}
Now, using~\cref{eq:G_explicit}, the objective function~$F^{(t)}(\bm{x})$ of the subproblem~\cref{eq:dca_subprob} can be written as
\begin{align}
  F^{(t)}(\bm{x}) = \frac{C_{\alpha} \rho}{2} \lVert \bm{x}\rVert^2 
                    - \langle \bm{s}^{(t)}, \bm{x} \rangle
                    - \sum_{i\in S_-} \alpha^*_i \phi_i(\bm{x}). \label{eq:F_t_explicit}
\end{align}
The first two terms of~\cref{eq:F_t_explicit} together form a quadratic in $\bm{x}$.
Defining
\begin{align}
  \bm{v}^{(t)} &= \frac{1}{C_{\alpha}\rho} \bm{s}^{(t)} \notag \\
               &= \bm{x}^{(t)} + \frac{2\gamma}{C_{\alpha}\rho} \sum_{i\in S_+} \alpha_i \phi_i(\bm{x}^{(t)}) (\bm{x}^{(t)}-\bm{x}_i), \label{eq:v_def}
\end{align}
and completing the square, we can rewrite~$F^{(t)}(\bm{x})$ as
\begin{align}
  F^{(t)}(\bm{x}) &= \frac{C_{\alpha} \rho}{2} \lVert \bm{x}\rVert^2 
                     - \langle \bm{s}^{(t)}, \bm{x} \rangle 
                     - \sum_{i\in S_-} \alpha^*_i \phi_i(\bm{x}) \notag \\
                  &= \frac{C_{\alpha} \rho}{2} \lVert \bm{x} - \bm{v}^{(t)} \rVert^2 
                     - \frac{1}{2C_{\alpha}\rho} \lVert \bm{s}^{(t)} \rVert^2
                     - \sum_{i\in S_-} \alpha^*_i \phi_i(\bm{x}).   \label{eq:quadratic_completion}
\end{align}
Here, $\bm{v}^{(t)}$ arises naturally as the center of the quadratic in $F^{(t)}(\bm{x})$.
The second term in the second line of~\cref{eq:quadratic_completion} is a constant independent of $\bm{x}$ and does not affect the $\argmin$.
Therefore, the subproblem~\cref{eq:dca_subprob} can be written in its final form as
\begin{align}
  \bm{x}^{(t+1)} &\in \argmin_{\bm{x}\in\Omega} \left\{
                        F^{(t)}(\bm{x})
                      \right\} \notag \\
                 &=  \argmin_{\bm{x}\in\Omega} \left\{
                        \frac{C_{\alpha} \rho}{2} \lVert \bm{x} - \bm{v}^{(t)} \rVert^2 
                        - \sum_{i\in S_-} \alpha^*_i \phi_i(\bm{x})
                      \right\}. \label{eq:dca-subprob3}
\end{align}

%%%%%%%%%%%%%%%%%%%%%%%%%%%%%%
\subsection{Geometric Structure of the Subproblem}\label{sec:structure-analysis}
The subproblem~\cref{eq:dca-subprob3} can be viewed as an optimization problem whose objective consists of a quadratic proximal term centered at $\bm{v}^{(t)}$ and a kernel sum based on $\{\bm{x}_i\}_{i\in S_-}$.
Moreover, $\bm{v}^{(t)}$ depends on $\{\bm{x}_i\}_{i\in S_+}$.
Then, let $W^{(t)} = 2\gamma \sum_{i\in S_+} \alpha_i \phi_i(\bm{x}^{(t)})$, which is positive under the assumption $C_\alpha > 0$, and define the centroid of samples in $S_+$ as
\begin{align*}
  \bm{x}^{(t)}_{+} = \frac{2\gamma}{W^{(t)}} \sum_{i\in S_+} \alpha_i \phi_i(\bm{x}^{(t)}) \bm{x}_i.
\end{align*}
With these notations, \cref{eq:v_def} can be rewritten as
\begin{align*}
  \bm{v}^{(t)} = \bm{x}^{(t)} + \frac{W^{(t)}}{C_{\alpha}\rho} (\bm{x}^{(t)} - \bm{x}^{(t)}_{+}).
\end{align*}
From this, we see that $\bm{v}^{(t)}$ lies on the extension of the line from the centroid $\bm{x}^{(t)}_+$ of $S_+$ through $\bm{x}^{(t)}$.

Moreover, if we focus on only the first term of~\cref{eq:dca-subprob3}, the optimal solution of the subproblem can be regarded as $\bm{v}^{(t)}$, the point at which this term attains its minimum.
On the other hand, the second term of~\cref{eq:dca-subprob3} is the kernel sum $-\sum_{i\in S_-} \alpha^*_i \phi_i(\bm{x})$ based on samples in $S_-$.
Since this term drives $\bm{x}$ in the direction in which $\phi_i(\bm{x})$ increases, it has the effect of moving $\bm{x}$ toward the samples in $S_-$.
Therefore, the optimal solution $\bm{x}^{(t+1)}$ of the subproblem~\cref{eq:dca-subprob3} is kept near $\bm{v}^{(t)}$ by the quadratic proximal term but is displaced from $\bm{v}^{(t)}$ by the influence of $S_-$.
In other words, we can view the samples in $S_+$ as determining the reference point of the search in the subproblem~\cref{eq:dca-subprob3}, while the samples in $S_-$ determine the direction of its displacement.
Although $\bm{v}^{(t)}$ depends on $\{\bm{x}_i\}_{i \in S_+}$, these samples are constants, so $\bm{v}^{(t)}$ does not affect the curvature of the subproblem~\cref{eq:dca-subprob3}.
Hence, the convexity analysis reduces to controlling the curvature arising from the second term via $\rho$.

%%%%%%%%%%%%%%%%%%%%%%%%%%%%%%
\subsection{Convexity of the Subproblem} \label{sec:convexity}
The convexity of the DCA subproblem~\cref{eq:dca-subprob3} depends on the DC decomposition parameter $\rho$.
In this section, we analytically derive the lower bound $\rho_{\min}$ on $\rho$ that ensures the subproblem is a convex optimization problem at any iterate $\bm{x}^{(t)}$.

\begin{proposition}[Convexity of the Subproblem] \label{prop:convexity}
For the DC decomposition~\cref{eq:svr-dc_bias} of the SVR prediction function~\cref{eq:svr-reg-kernel} with a Gaussian RBF kernel, the subproblem~\cref{eq:dca-subprob3} is a convex optimization problem provided that
\begin{align}
  \rho \geq \rho_{\min} = 2\gamma \exp\!\left(-\tfrac{3}{2}\right). \label{eq:rho_min}
\end{align}
\end{proposition}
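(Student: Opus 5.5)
The plan is to show that the objective of \cref{eq:dca-subprob3} (equivalently $F^{(t)}$ in \cref{eq:F_t_explicit}, since the two differ by a constant) has a positive semidefinite Hessian everywhere on $\mathbb{R}^d$ when $\rho \ge \rho_{\min}$. Because $\Omega$ is nonempty, closed and convex, this makes \cref{eq:dca-subprob3} a convex optimization problem. The quadratic proximal term contributes exactly $C_\alpha\rho I$, and the linear term and the constant contribute nothing. Hence
\begin{align*}
  \nabla^2 F^{(t)}(\bm{x}) = C_\alpha\rho\, I - \sum_{i\in S_-}\alpha_i^*\,\nabla^2\phi_i(\bm{x}),
\end{align*}
which does not depend on $t$ or $\bm{v}^{(t)}$, consistent with the remark at the end of \cref{sec:structure-analysis}.

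The first step is to compute the kernel Hessian in closed form. Writing $\bm{u}=\bm{x}-\bm{x}_i$, we have $\nabla\phi_i = -2\gamma\phi_i\bm{u}$ and
\begin{align*}
  \nabla^2\phi_i(\bm{x}) = \phi_i(\bm{x})\bigl(4\gamma^2\bm{u}\bm{u}^\top - 2\gamma I\bigr).
\end{align*}
This is a scalar multiple of the identity plus a rank-one matrix. By the Rayleigh quotient, its largest eigenvalue is $e^{-\gamma\lVert\bm{u}\rVert^2}(4\gamma^2\lVert\bm{u}\rVert^2-2\gamma)$; in directions orthogonal to $\bm{u}$ the eigenvalue is $-2\gamma\phi_i<0$. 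Setting $s=\gamma\lVert\bm{u}\rVert^2\ge 0$, this largest eigenvalue equals $2\gamma e^{-s}(2s-1)$. Maximizing over $s\ge0$, the derivative vanishes at $s=3/2$, and the maximum value is $4\gamma e^{-3/2}$. This is precisely the bound behind \cref{lem:kernel_eig}, which I would cite if it states exactly this eigenvalue bound. The conclusion is
\begin{align*}
  \nabla^2\phi_i(\bm{x}) \preceq 4\gamma e^{-3/2} I \quad\text{for all } \bm{x}.
\end{align*}

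The second step exploits the weighting by the dual coefficients, which is where the factor of $2$ improvement over the per-kernel condition $\rho\ge 4\gamma e^{-3/2}$ comes from. Since $\alpha_i^*\ge0$, summing the matrix inequalities and then using \cref{eq:Calpha_half}, namely $\sum_{i\in S_-}\alpha_i^* = C_\alpha/2$, gives
\begin{align*}
  \nabla^2 F^{(t)}(\bm{x}) \succeq \Bigl(C_\alpha\rho - 4\gamma e^{-3/2}\sum_{i\in S_-}\alpha_i^*\Bigr) I = C_\alpha\bigl(\rho - 2\gamma e^{-3/2}\bigr) I .
\end{align*}
With $C_\alpha>0$, this is positive semidefinite exactly when $\rho\ge\rho_{\min}$. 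The same computation applied to $H$ in \cref{eq:H_explicit}, using $\sum_{i\in S_+}\alpha_i=C_\alpha/2$, shows that $G$ and $H$ are also convex under this condition, so the DC decomposition itself remains valid.

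The only genuinely delicate step is the one-dimensional maximization in the first step. There one must check that the supremum of the top eigenvalue over all $\bm{x}$ is attained at $\gamma\lVert\bm{x}-\bm{x}_i\rVert^2=3/2$. The checks are that the value at $s=0$ is $-2\gamma$, which is negative, and that the function tends to $0$ as $s\to\infty$, so the interior critical point is the global maximum. Everything else is linear bookkeeping with nonnegative weights.
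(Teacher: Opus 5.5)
Your proposal is correct and follows the paper's proof essentially step for step. Both arguments write the Hessian as $C_\alpha\rho I - \sum_{i\in S_-}\alpha_i^*\nabla^2\phi_i(\bm{x})$, bound each kernel Hessian by $4\gamma e^{-3/2} I$ through the Rayleigh-quotient/one-variable maximization of \cref{lem:kernel_eig}, and then use $\sum_{i\in S_-}\alpha_i^* = C_\alpha/2$ from \cref{eq:Calpha_half} to obtain the halved threshold $\rho \ge 2\gamma e^{-3/2}$.
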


\begin{proof}
A sufficient condition for the subproblem~\cref{eq:dca-subprob3} to be convex is that the Hessian of $F^{(t)}(\bm{x})$ is positive semidefinite for all $\bm{x}\in \Omega$:
\begin{align*}
  \nabla^2 F^{(t)}(\bm{x}) \succeq 0.
\end{align*}
Here, the Hessian of $F^{(t)}(\bm{x})$ is
\begin{align*}
  \nabla^2 F^{(t)}(\bm{x}) 
    &= C_{\alpha}\rho I 
       - \sum_{i\in S_-} \alpha^*_i \nabla^2 \phi_i(\bm{x}),
\end{align*}
so a sufficient condition for $\nabla^2 F^{(t)}(\bm{x}) \succeq 0$ is
\begin{align*}
  C_{\alpha}\rho I 
    \succeq \sum_{i\in S_-} \alpha^*_i \nabla^2 \phi_i(\bm{x}).
\end{align*}
By~\cref{lem:kernel_eig}, the maximum eigenvalue of the Hessian of the Gaussian RBF $\phi_i(\bm{x}) = \exp(-\gamma\lVert \bm{x}-\bm{x}_i \rVert^2)$ satisfies
\begin{align*}
  \lambda_{\max} (\nabla^2 \phi_i(\bm{x})) \leq 4\gamma \exp(-\tfrac{3}{2})
\end{align*}
for any $\bm{x}$.
Therefore, for any $\bm{x}$,
\begin{align*}
  \nabla^2 \phi_i(\bm{x}) \preceq 4\gamma \exp(-\tfrac{3}{2}) I.
\end{align*}
Hence,
\begin{align*}
  \sum_{i\in S_-} \alpha^*_i \nabla^2 \phi_i(\bm{x}) \preceq \left(4\gamma \exp(-\tfrac{3}{2}) \sum_{i\in S_-} \alpha^*_i \right) I.
\end{align*}
As a sufficient condition, we obtain
\begin{align*}
  C_{\alpha}\rho I \succeq \left(4\gamma \exp(-\tfrac{3}{2}) \sum_{i\in S_-} \alpha^*_i\right) I.
\end{align*}
That is, using~\cref{eq:Calpha_half}, we obtain
\begin{align*}
  \rho &\geq \frac{4\gamma \exp(-\tfrac{3}{2})}{C_{\alpha}} \sum_{i\in S_-} \alpha^*_i \notag \\
       &= \frac{4\gamma \exp(-\tfrac{3}{2})}{2 \sum_{i\in S_-} \alpha_i^*} \sum_{i\in S_-} \alpha^*_i \notag \\
       &= 2\gamma \exp(-\tfrac{3}{2}).
\end{align*}
Therefore, when $\rho \geq \rho_{\min}$, $\nabla^2 F^{(t)}(\bm{x}) \succeq 0$ holds for all $\bm{x}$, and the subproblem~\cref{eq:dca-subprob3} is a convex optimization problem.
\end{proof}

%%%%%%%%%%%%%%%%%%%%%%%%%%%%%%%%%%
\subsection{Strong Convexity, Smoothness Bound, and Convergence Guarantee}\label{sec:convergence}
In this section, for the DC decomposition $\hat{f} = G - H$ of the RBF-SVR prediction function, we analytically derive a lower bound $\mu$ on the strong convexity parameters of the DC components $G, H$ and an upper bound $L$ on their gradient Lipschitz constants.
Furthermore, using these, we show that the descent inequality holds at each iteration of DCA and that any accumulation point of the iterate sequence satisfies the criticality condition.
We thereby establish that the convergence properties of DCA to critical points can be evaluated prior to running the iterations in terms of quantities determined by the SVR training results and hyperparameters.
In what follows, we define the strong convexity parameter of any proper closed convex function $\Phi$ as
\begin{align*}
  \sigma(\Phi) = \sup\{\, \eta \geq 0 \mid \Phi - \tfrac{\eta}{2}\lVert \cdot \rVert^2 \text{ is convex}\,\}.
\end{align*}

\begin{theorem}[Strong Convexity of the DC Components] \label{thm:mu}
In the DC decomposition~\cref{eq:svr-dc_bias} of the SVR prediction function~\cref{eq:svr-reg-kernel} with a Gaussian RBF kernel, suppose $\rho > \rho_{\min}$.
Then the strong convexity parameters of $G$ and $H$ satisfy
\begin{align*}
  \sigma(G),\, \sigma(H) \geq \mu = C_{\alpha}\bigl\{\rho - 2\gamma\exp\!\left(-\tfrac{3}{2}\right)\bigr\}.
\end{align*}
\end{theorem}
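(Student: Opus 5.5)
We prove the bound for $G$; the argument for $H$ is identical with $S_+$ and $\alpha_i$ in place of $S_-$ and $\alpha_i^*$. The approach is a Hessian comparison, mirroring the proof of \cref{prop:convexity}. Since $G$ is $C^\infty$ on $\mathbb{R}^d$, a sufficient condition for $G - \tfrac{\eta}{2}\lVert\cdot\rVert^2$ to be convex is
\begin{align*}
  \nabla^2 G(\bm{x}) - \eta I \succeq 0 \quad \text{for all } \bm{x}\in\mathbb{R}^d.
\end{align*}
Once this holds with $\eta = \mu$, the definition of $\sigma$ as a supremum gives $\sigma(G) \geq \mu$.

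The computation proceeds as follows. First, from \cref{eq:G_explicit},
\begin{align*}
  \nabla^2 G(\bm{x}) = C_\alpha\rho I - \sum_{i\in S_-}\alpha_i^*\nabla^2\phi_i(\bm{x}).
\end{align*}
Second, by \cref{lem:kernel_eig} we have $\nabla^2\phi_i(\bm{x}) \preceq 4\gamma\exp(-\tfrac32) I$ for every $\bm{x}$. Because $\alpha_i^* \geq 0$, summing preserves this ordering:
\begin{align*}
  \sum_{i\in S_-}\alpha_i^*\nabla^2\phi_i(\bm{x}) \preceq 4\gamma\exp(-\tfrac32)\Bigl(\sum_{i\in S_-}\alpha_i^*\Bigr) I .
\end{align*}
Third, \cref{eq:Calpha_half} gives $\sum_{i\in S_-}\alpha_i^* = C_\alpha/2$. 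Combining the three steps,
\begin{align*}
  \nabla^2 G(\bm{x}) \succeq C_\alpha\bigl\{\rho - 2\gamma\exp(-\tfrac32)\bigr\} I = \mu I .
\end{align*}
Hence $G - \tfrac{\mu}{2}\lVert\cdot\rVert^2$ has a positive semidefinite Hessian everywhere and is convex.

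The hypothesis $\rho > \rho_{\min}$ and the standing assumption $C_\alpha > 0$ ensure $\mu > 0$. This places $\mu$ in the admissible set $\{\eta\ge0\}$ and makes the bound a genuine strong-convexity statement.

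There is no real analytical obstacle. The only point needing care is that the eigenvalue bound of \cref{lem:kernel_eig} must hold uniformly in $\bm{x}$ over all of $\mathbb{R}^d$, not merely on $\Omega$. This matters because $\sigma$ concerns convexity of $G - \tfrac{\eta}{2}\lVert\cdot\rVert^2$ as a function on the whole space. The lemma is stated for any $\bm{x}$, so the argument goes through.

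It is also worth noting that the identity $\sum_{S_+}\alpha_i = \sum_{S_-}\alpha_i^* = C_\alpha/2$ is what makes the bound symmetric in $G$ and $H$. Without it, the two components would have different constants.
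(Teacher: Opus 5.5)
Your proof is correct and follows essentially the same route as the paper. Like the paper, you write $\nabla^2 G(\bm{x}) = C_\alpha\rho I - \sum_{i\in S_-}\alpha_i^*\nabla^2\phi_i(\bm{x})$, bound each kernel Hessian by $4\gamma\exp(-\tfrac32)I$ via \cref{lem:kernel_eig}, sum with nonnegative weights, and use \cref{eq:Calpha_half} to get $\nabla^2 G \succeq \mu I$, with the identical argument for $H$; your added remarks on uniformity over all of $\mathbb{R}^d$ and on $\mu>0$ are sound and make explicit what the paper leaves implicit.
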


\begin{proof}
By~\cref{eq:G_explicit}, the Hessian of $G$ is
\begin{align*}
  \nabla^2 G(\bm{x}) = C_{\alpha}\rho\, I - \sum_{i\in S_-} \alpha_i^* \nabla^2 \phi_i(\bm{x}).
\end{align*}
By~\cref{lem:kernel_eig}, $\lambda_{\max}(\nabla^2 \phi_i(\bm{x})) \leq 4\gamma\exp(-\tfrac{3}{2})$, and since $\nabla^2 \phi_i(\bm{x})$ is symmetric,
\begin{align*}
  \nabla^2 \phi_i(\bm{x}) \preceq 4\gamma\exp(-\tfrac{3}{2}) I
\end{align*}
holds for any $\bm{x}$.
Noting that $\alpha_i^* \geq 0$, summing with these nonnegative coefficients yields
\begin{align*}
  \sum_{i\in S_-} \alpha_i^* \nabla^2 \phi_i(\bm{x}) \preceq 4\gamma \exp\!\left(-\tfrac{3}{2}\right) \sum_{i\in S_-}\alpha_i^* I.
\end{align*}
Combining this with~\cref{eq:Calpha_half}, under the assumption $\rho > \rho_{\min}$,
\begin{align*}
  \nabla^2 G(\bm{x})
    &\succeq \left\{C_{\alpha}\rho - 4\gamma \exp\!\left(-\tfrac{3}{2}\right) \sum_{i\in S_-} \alpha^*_i\right\} I \notag \\
    &= C_{\alpha}\left\{\rho - 2\gamma \exp\!\left(-\tfrac{3}{2}\right)\right\} I \notag \\
    &=: \mu I
\end{align*}
holds for any $\bm{x}$.
That is, $\sigma(G) \geq \mu$.

Next, we evaluate the strong convexity parameter of $H$.
By~\cref{eq:H_explicit}, the Hessian of $H$ is
\begin{align*}
  \nabla^2 H(\bm{x}) = C_{\alpha}\rho\, I - \sum_{i\in S_+} \alpha_i \nabla^2 \phi_i(\bm{x}).
\end{align*}
By the same argument as for $G$, $\nabla^2 \phi_i(\bm{x}) \preceq 4\gamma\exp(-\tfrac{3}{2}) I$ holds for any $\bm{x}$.
Noting that $\alpha_i \geq 0$, summing with these nonnegative coefficients yields
\begin{align*}
  \sum_{i\in S_+} \alpha_i \nabla^2 \phi_i(\bm{x}) \preceq 4\gamma \exp\!\left(-\tfrac{3}{2}\right) \sum_{i\in S_+} \alpha_i I.
\end{align*}
Combining this with~\cref{eq:Calpha_half}, under the assumption $\rho > \rho_{\min}$,
\begin{align*}
  \nabla^2 H(\bm{x})
    &\succeq \left\{C_{\alpha}\rho - 4\gamma \exp\!\left(-\tfrac{3}{2}\right) \sum_{i\in S_+} \alpha_i\right\} I \notag \\
    &= C_{\alpha}\left\{\rho - 2\gamma \exp\!\left(-\tfrac{3}{2}\right)\right\} I \notag \\
    &= \mu I
\end{align*}
holds for any $\bm{x}$.
That is, $\sigma(H) \geq \mu$.
\end{proof}

\begin{theorem}[Gradient Lipschitz Bound for the DC Components] \label{thm:L}
Under the assumptions of~\cref{thm:mu}, for any $\bm{x}\in\mathbb{R}^d$, the spectral norms of the Hessians of $G$ and $H$ satisfy
\begin{align*}
  \lVert \nabla^2 G(\bm{x}) \rVert_2,\, \lVert \nabla^2 H(\bm{x}) \rVert_2 \leq L = C_{\alpha}(\rho + \gamma).
\end{align*}
\end{theorem}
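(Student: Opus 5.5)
Since $G$ and $H$ are $C^\infty$ with symmetric Hessians, I will bound $\lVert\nabla^2 G(\bm{x})\rVert_2$ by bounding the largest and smallest eigenvalues of $\nabla^2 G(\bm{x})$ separately, and then do the same for $H$. The starting point is the expression used in the proof of \cref{thm:mu},
\begin{align*}
  \nabla^2 G(\bm{x}) = C_{\alpha}\rho\, I - \sum_{i\in S_-} \alpha_i^* \nabla^2 \phi_i(\bm{x}),
\end{align*}
with the analogous formula for $H$ summing over $S_+$ with weights $\alpha_i$.

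\textbf{Lower eigenvalue.} \Cref{thm:mu} already gives $\nabla^2 G(\bm{x}) \succeq \mu I$. Since $\rho>\rho_{\min}$, we have $\mu>0$. So the smallest eigenvalue is positive, and the spectral norm equals the largest eigenvalue.

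\textbf{Upper eigenvalue.} I need a lower bound on $\lambda_{\min}(\nabla^2\phi_i(\bm{x}))$. Writing $r=\lVert\bm{x}-\bm{x}_i\rVert$, the Gaussian Hessian is
\begin{align*}
  \nabla^2\phi_i(\bm{x}) = \phi_i(\bm{x})\bigl\{-2\gamma I + 4\gamma^2(\bm{x}-\bm{x}_i)(\bm{x}-\bm{x}_i)^\top\bigr\},
\end{align*}
which is the identity-plus-rank-one structure underlying \cref{lem:kernel_eig}. Its eigenvalues are $-2\gamma e^{-\gamma r^2}$ (on the orthogonal complement, when $d\ge 2$) and $(4\gamma^2 r^2-2\gamma)e^{-\gamma r^2}$ (along $\bm{x}-\bm{x}_i$). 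Both are at least $-2\gamma$, with equality at $r=0$. This is presumably contained in \cref{lem:kernel_eig}; if not, the Rayleigh quotient shows it in one line, since the rank-one term is positive semidefinite and $\phi_i\le 1$. Hence $\nabla^2\phi_i(\bm{x})\succeq -2\gamma I$. Multiplying by $\alpha_i^*\ge 0$, summing, and using \cref{eq:Calpha_half}:
\begin{align*}
  \nabla^2 G(\bm{x}) \preceq \Bigl(C_\alpha\rho + 2\gamma\sum_{i\in S_-}\alpha_i^*\Bigr) I = C_\alpha(\rho+\gamma) I.
\end{align*}

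\textbf{Conclusion.} Combining gives $0<\mu I\preceq \nabla^2 G(\bm{x})\preceq C_\alpha(\rho+\gamma)I$, so $\lVert\nabla^2 G(\bm{x})\rVert_2\le L$. The identical argument over $S_+$, using $\sum_{i\in S_+}\alpha_i = C_\alpha/2$, gives the bound for $H$.

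The only nonroutine step is the bound $\lambda_{\min}(\nabla^2\phi_i)\ge -2\gamma$. It requires handling both eigen-directions and noting that the radial eigenvalue $(4\gamma^2r^2-2\gamma)e^{-\gamma r^2}$ is also minimized at $r=0$. That eigenvalue is negative only for $\gamma r^2<1/2$, and there its magnitude $2\gamma(1-2\gamma r^2)e^{-\gamma r^2}$ is at most $2\gamma$. The point to keep in mind is that the norm bound needs both sides: the positive lower bound from \cref{thm:mu} is what lets the spectral norm be replaced by the largest eigenvalue.
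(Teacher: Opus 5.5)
Your proposal is correct and follows essentially the same route as the paper: you use $\mu>0$ from \cref{thm:mu} to identify the spectral norm with $\lambda_{\max}$, then combine $\nabla^2\phi_i(\bm{x})\succeq -2\gamma I$ (\cref{lem:kernel_eig}) with the nonnegative weights and \cref{eq:Calpha_half} to obtain $\nabla^2 G(\bm{x}),\nabla^2 H(\bm{x})\preceq C_\alpha(\rho+\gamma)I$. Your explicit computation of the two eigenvalues of the kernel Hessian is just a more concrete version of the lemma's Rayleigh-quotient argument.
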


\begin{proof}
Since $\sigma(G) \geq \mu > 0$, $G$ is strongly convex under $\rho > \rho_{\min}$, and its Hessian $\nabla^2 G(\bm{x})$ is positive definite for any $\bm{x}$.
Therefore,
\begin{align*}
  \lVert \nabla^2 G(\bm{x}) \rVert_2 = \lambda_{\max}(\nabla^2 G(\bm{x}))
\end{align*}
holds, and the evaluation of the gradient Lipschitz constant reduces to that of the maximum eigenvalue of $\nabla^2 G(\bm{x})$.
From~\cref{eq:G_explicit},
\begin{align*}
  \lambda_{\max}(\nabla^2 G(\bm{x})) = C_{\alpha}\rho - \lambda_{\min}\!\left(\sum_{i\in S_-} \alpha_i^* \nabla^2 \phi_i(\bm{x})\right)
\end{align*}
holds.
By~\cref{lem:kernel_eig}, $\lambda_{\min}(\nabla^2\phi_i(\bm{x})) \geq -2\gamma$, and since $\nabla^2\phi_i(\bm{x})$ is symmetric,
\begin{align*}
  \nabla^2 \phi_i(\bm{x}) \succeq -2\gamma I
\end{align*}
holds for any $\bm{x}$.
Noting that $\alpha_i^* \geq 0$, summing with these nonnegative coefficients yields
\begin{align*}
  \sum_{i\in S_-} \alpha_i^* \nabla^2 \phi_i(\bm{x}) \succeq -2\gamma \sum_{i\in S_-} \alpha_i^* I.
\end{align*}
Comparing the minimum eigenvalues of both sides, we obtain
\begin{align*}
  \lambda_{\min}\!\left(\sum_{i\in S_-} \alpha_i^* \nabla^2 \phi_i(\bm{x})\right)
  \geq -2\gamma \sum_{i\in S_-} \alpha_i^*.
\end{align*}
Using $\sum_{i\in S_-} \alpha_i^* = C_{\alpha}/2$ from~\cref{eq:Calpha_half},
\begin{align*}
  \lambda_{\max}(\nabla^2 G(\bm{x}))
  &\leq C_{\alpha}\rho + 2\gamma \sum_{i\in S_-} \alpha_i^* \\
  &= C_{\alpha}\rho + C_{\alpha}\gamma \\
  &= C_{\alpha}(\rho + \gamma)
\end{align*}
holds for any $\bm{x}$.

Similarly for $H$, since $\sigma(H) \geq \mu > 0$, $H$ is strongly convex, and by the same argument as for $G$ using~\cref{eq:Calpha_half}, we obtain
\begin{align*}
  \lVert \nabla^2 H(\bm{x}) \rVert_2 &= \lambda_{\max}(\nabla^2 H(\bm{x})) \\
                                     &\leq C_{\alpha}(\rho + \gamma).
\end{align*}

Therefore, the gradient Lipschitz constants of both $G$ and $H$ are bounded above by the same constant
\begin{align*}
  L = C_{\alpha}(\rho + \gamma).
\end{align*}
\end{proof}

The strong convexity and smoothness bounds $\mu$ and $L$ from~\cref{thm:mu,thm:L} are both analytically determined prior to running the DCA iterations from only three quantities: $C_{\alpha}$, which is fixed by the SVR training results; the RBF kernel parameter $\gamma$; and the DC decomposition parameter $\rho$.
Moreover, $\mu$ and $L$ share the common leading term $C_{\alpha}\rho$, but their perturbation terms are asymmetric.
Specifically, we have
\begin{align*}
  \mu &= C_{\alpha}\rho - 2C_{\alpha}\gamma \exp\!\left(-\tfrac{3}{2}\right),\\
  L &= C_{\alpha}\rho + C_{\alpha}\gamma,
\end{align*}
and the coefficient of the perturbation term is $2\gamma\exp(-\tfrac{3}{2})$ for $\mu$ and $\gamma$ for $L$.
This asymmetry stems from~\cref{lem:kernel_eig}, where the upper bound $4\gamma\exp(-\tfrac{3}{2})$ on the maximum eigenvalue and the lower bound $-2\gamma$ on the minimum eigenvalue of each individual kernel function's Hessian are two distinct values derived independently of each other.

These two constants play different roles in the convergence analysis of DCA.
The strong convexity lower bound $\mu$ guarantees the per-iteration descent of the objective function value, as will be shown by the descent inequality below.
On the other hand, the gradient Lipschitz upper bound $L$ for $G$ characterizes the smoothness of the DCA subproblem and serves as the reference smoothness constant for the subproblem in applications to DC-FW~\cite{Maskan2025,Pokutta2025}.
Thus, the strong convexity bound $\mu$ characterizes the descent of DCA itself, while the smoothness bound $L$ characterizes the structure of the subproblem.
Since both constants are dominated by their common leading term $C_{\alpha}\rho$, this quantity emerges as the central one summarizing the convergence properties of DCA.

In what follows, we use the result of~\cref{thm:mu} to derive the descent inequality at each iteration of DCA.
To address the convergence of constrained DCA in its general form, we introduce the indicator function of $\Omega$,
\begin{align*}
  \chi_{\Omega}(\bm{x}) =
  \begin{cases}
    0       & (\bm{x} \in \Omega), \\
    +\infty & (\bm{x} \notin \Omega),
  \end{cases}
\end{align*}
and incorporate it into $G$ to define
\begin{align*}
  \tilde{G}(\bm{x}) = G(\bm{x}) + \chi_{\Omega}(\bm{x}).
\end{align*}
The indicator function $\chi_{\Omega}$ on the nonempty closed convex set $\Omega$ is a proper closed convex function.
By the construction in this paper, $G$ is also proper closed convex and of class $C^{\infty}$, and if $G$ is strongly convex on $\Omega$, the strong convexity parameter of $\tilde{G}$ satisfies
\begin{align*}
  \sigma(\tilde{G}) \geq \sigma(G).
\end{align*}
Then, each iteration of~\cref{alg:dca} can be equivalently rewritten as
\begin{align}
  \bm{x}^{(t+1)} = \argmin_{\bm{x}}\bigl\{\tilde{G}(\bm{x}) - \langle \bm{s}^{(t)}, \bm{x}\rangle\bigr\}, \label{eq:dca_subprob_tilde}
\end{align}
which corresponds to the standard formulation for treating constrained DCA within the framework of unconstrained DCA.
Since $H$ is also proper closed convex and of class $C^{\infty}$ by the construction in this paper, the subdifferential $\partial H$ coincides with the gradient $\nabla H$, as noted in~\cref{eq:grad_H}.

According to the general convergence theory of DCA~\cite{PhamDinh1997}, when $\tilde{G}$ and $H$ in the DC decomposition $\hat{f} + \chi_{\Omega} = \tilde{G} - H$ are each proper closed convex functions satisfying $\sigma(\tilde{G}) > 0$ and $\sigma(H) > 0$, the descent of the objective function value at each iteration of DCA is guaranteed by
\begin{align}
  \hat{f}(\bm{x}^{(t+1)}) \leq \hat{f}(\bm{x}^{(t)}) - \frac{\sigma(\tilde{G})+\sigma(H)}{2}\lVert \bm{x}^{(t+1)} - \bm{x}^{(t)} \rVert^2  \label{eq:dca_descent_guarantee}
\end{align}
~\cite[Prop.~2(i)]{PhamDinh1997}.
For the RBF-SVR considered in this paper, this general theorem is applicable, and the descent inequality can be written down explicitly in terms of the strong convexity lower bound $\mu$.

\begin{corollary}[Descent Inequality of DCA] \label{cor:descent}
Under the assumptions of~\cref{thm:mu}, the sequence of iterates $\{\bm{x}^{(t)}\}$ generated by~\cref{alg:dca} satisfies
\begin{align}
  \hat{f}(\bm{x}^{(t+1)}) \leq \hat{f}(\bm{x}^{(t)}) - \mu \lVert \bm{x}^{(t+1)} - \bm{x}^{(t)} \rVert^2 \label{eq:descent}
\end{align}
at each iteration $t$.
\end{corollary}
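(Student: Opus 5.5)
The corollary should follow by feeding the bounds of \cref{thm:mu} into the general descent estimate \cref{eq:dca_descent_guarantee}. To make this self-contained, I would also sketch why \cref{eq:dca_descent_guarantee} holds in our smooth setting rather than rely only on \cite[Prop.~2(i)]{PhamDinh1997}.

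The first step is to check the hypotheses. Under $\rho>\rho_{\min}$ and $C_\alpha>0$, \cref{thm:mu} gives $\mu>0$ and $\sigma(G),\sigma(H)\ge\mu$. Since $\chi_\Omega$ is convex, $\tilde G-\tfrac{\mu}{2}\lVert\cdot\rVert^2=(G-\tfrac{\mu}{2}\lVert\cdot\rVert^2)+\chi_\Omega$ is convex. Hence $\sigma(\tilde G)\ge\mu$. Both $\tilde G$ and $H$ are proper closed convex.

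The second step derives two one-sided inequalities, writing $\bm{x}^{(t+1)}$ as the solution of \cref{eq:dca_subprob_tilde}. Its optimality condition is $\bm{s}^{(t)}\in\partial\tilde G(\bm{x}^{(t+1)})$. Strong convexity of $\tilde G$ with modulus $\mu$ then gives
\begin{align*}
  \tilde G(\bm{x}^{(t)}) \ge \tilde G(\bm{x}^{(t+1)}) + \langle \bm{s}^{(t)}, \bm{x}^{(t)}-\bm{x}^{(t+1)}\rangle + \tfrac{\mu}{2}\lVert \bm{x}^{(t+1)}-\bm{x}^{(t)}\rVert^2 .
\end{align*}
Here $\bm{x}^{(t)}\in\Omega$ for $t\ge0$, since $\bm{x}^{(0)}\in\Omega$ and every later iterate solves a problem constrained to $\Omega$. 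So $\tilde G(\bm{x}^{(t)})=G(\bm{x}^{(t)})$. Similarly, $\bm{s}^{(t)}=\nabla H(\bm{x}^{(t)})$ and $\sigma(H)\ge\mu$ give
\begin{align*}
  H(\bm{x}^{(t+1)}) \ge H(\bm{x}^{(t)}) + \langle \bm{s}^{(t)}, \bm{x}^{(t+1)}-\bm{x}^{(t)}\rangle + \tfrac{\mu}{2}\lVert \bm{x}^{(t+1)}-\bm{x}^{(t)}\rVert^2 .
\end{align*}

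The third step combines them. Adding the two inequalities cancels the linear terms and leaves
\begin{align*}
  G(\bm{x}^{(t+1)})-H(\bm{x}^{(t+1)}) \le G(\bm{x}^{(t)})-H(\bm{x}^{(t)}) - \mu\lVert \bm{x}^{(t+1)}-\bm{x}^{(t)}\rVert^2 .
\end{align*}
Since $\hat f=G-H$, this is exactly \cref{eq:descent}. Equivalently, it is \cref{eq:dca_descent_guarantee} with $(\sigma(\tilde G)+\sigma(H))/2\ge\mu$.

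I expect the main obstacle to be bookkeeping about existence and uniqueness of $\bm{x}^{(t+1)}$. It exists because $\tilde G$ is strongly convex, closed and proper, and it is unique for the same reason, so the $\argmin$ in \cref{eq:dca_subprob_tilde} is well defined. One should also note that strong convexity of $\tilde G$ on $\Omega$ suffices, because all iterates lie in $\Omega$.
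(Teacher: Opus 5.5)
Your proof is correct and follows essentially the same route as the paper. Like the paper, you verify that $\tilde G$ and $H$ are proper closed convex with $\sigma(\tilde G)\ge\sigma(G)\ge\mu$ and $\sigma(H)\ge\mu$, and then apply the descent estimate \cref{eq:dca_descent_guarantee}. The paper cites \cite[Prop.~2(i)]{PhamDinh1997} for that estimate, whereas you re-derive it by adding the strong-convexity inequalities at $\bm{x}^{(t+1)}$ (using $\bm{s}^{(t)}\in\partial\tilde G(\bm{x}^{(t+1)})$) and at $\bm{x}^{(t)}$ (using $\bm{s}^{(t)}=\nabla H(\bm{x}^{(t)})$). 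This makes the argument self-contained and makes explicit that $\bm{x}^{(t)}\in\Omega$, so $\tilde G=G$ at the iterates.
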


\begin{proof}
Since both $G$ and $H$ are proper closed convex functions of class $C^{\infty}$, and $\tilde{G} = G + \chi_{\Omega}$ is also a proper closed convex function, the assumptions of~\cite[Prop.~2(i)]{PhamDinh1997} are satisfied.
By~\cref{thm:mu}, $\sigma(G), \sigma(H) \geq \mu$.
Moreover, combining $\sigma(\tilde{G}) \geq \sigma(G)$ shown above with~\cref{thm:mu} yields $\sigma(\tilde{G}) \geq \sigma(G) \geq \mu$.
Substituting these into~\cref{eq:dca_descent_guarantee} yields~\cref{eq:descent}.
\end{proof}

Under the assumption $\rho > \rho_{\min}$, we have $\mu = C_{\alpha}\bigl\{\rho - 2\gamma\exp(-\tfrac{3}{2})\bigr\} > 0$, so~\cref{cor:descent} guarantees that the descent of the objective function value at each iteration of DCA is at least $\mu \lVert \bm{x}^{(t+1)} - \bm{x}^{(t)} \rVert^2$.
Note that~\cref{cor:descent} is a result for the ideal case where the subproblem~\cref{eq:dca_subprob_tilde} is solved exactly at each iteration; the behavior when the subproblem is solved approximately by numerical computation is empirically verified in~\cref{sec:experiments}.

In what follows, we use the descent inequality of~\cref{cor:descent} to show that any accumulation point of the iterate sequence $\{\bm{x}^{(t)}\}$ generated by DCA satisfies the criticality condition.
\begin{corollary}[Criticality of Every Accumulation Point of the Iterate Sequence] \label{cor:critical}
Suppose $\rho > \rho_{\min}$.
Every accumulation point $\bm{x}^*$ of the iterate sequence $\{\bm{x}^{(t)}\}$ generated by running~\cref{alg:dca} indefinitely without the stopping condition satisfies the criticality condition
\begin{align}
  0 \in \partial G(\bm{x}^*) - \nabla H(\bm{x}^*) + N_{\Omega}(\bm{x}^*). \label{eq:critical_point}
\end{align}
Here, $N_{\Omega}(\bm{x}^*)$ denotes the normal cone of $\Omega$ at $\bm{x}^*$.
\end{corollary}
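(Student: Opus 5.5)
We combine the descent inequality of \cref{cor:descent} with the compactness of $\Omega$ and the optimality condition of the convex subproblem~\cref{eq:dca_subprob_tilde}, then pass to the limit along a convergent subsequence.

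\textbf{Step 1: successive iterates merge.} Since $\hat{f}$ is continuous and $\Omega$ is nonempty and compact, $\hat{f}$ is bounded below on $\Omega$ by some $\hat{f}_{\inf} > -\infty$. All iterates $\bm{x}^{(t)}$ for $t\ge 1$ lie in $\Omega$. Summing~\cref{eq:descent} from $t=1$ to $T$ telescopes to
\begin{align*}
  \mu \sum_{t=1}^{T} \lVert \bm{x}^{(t+1)} - \bm{x}^{(t)} \rVert^2 \le \hat{f}(\bm{x}^{(1)}) - \hat{f}_{\inf}.
\end{align*}
Because $\mu > 0$ when $\rho > \rho_{\min}$ (\cref{thm:mu}), the series converges, and hence $\lVert \bm{x}^{(t+1)} - \bm{x}^{(t)} \rVert \to 0$.

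\textbf{Step 2: optimality condition at each step.} For each $t$, $\bm{x}^{(t+1)}$ minimizes the convex function $G - \langle \bm{s}^{(t)}, \cdot\rangle$ over the closed convex set $\Omega$, and $G$ is $C^\infty$. The first-order optimality condition therefore reads
\begin{align*}
  \bm{s}^{(t)} - \nabla G(\bm{x}^{(t+1)}) \in N_{\Omega}(\bm{x}^{(t+1)}),
\end{align*}
where $\bm{s}^{(t)} = \nabla H(\bm{x}^{(t)})$. If $\bm{x}^{(0)}\notin\Omega$, we simply discard the first index.

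\textbf{Step 3: passage to the limit.} Let $\bm{x}^{(t_k)} \to \bm{x}^*$ along a subsequence; then $\bm{x}^* \in \Omega$ because $\Omega$ is closed. By Step 1, $\bm{x}^{(t_k+1)} \to \bm{x}^*$ as well. Since $\nabla G$ and $\nabla H$ are continuous (indeed Lipschitz with constant $L$ by \cref{thm:L}), the vectors
\begin{align*}
  \bm{u}_k := \nabla H(\bm{x}^{(t_k)}) - \nabla G(\bm{x}^{(t_k+1)})
\end{align*}
converge to $\nabla H(\bm{x}^*) - \nabla G(\bm{x}^*)$.

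It remains to use the closedness of the graph of the normal cone mapping of a closed convex set. By definition, $\bm{u}_k \in N_\Omega(\bm{x}^{(t_k+1)})$ means $\langle \bm{u}_k, \bm{y} - \bm{x}^{(t_k+1)}\rangle \le 0$ for all $\bm{y}\in\Omega$. Letting $k\to\infty$ for each fixed $\bm{y}$ gives
\begin{align*}
  \langle \nabla H(\bm{x}^*) - \nabla G(\bm{x}^*), \bm{y} - \bm{x}^*\rangle \le 0 \quad \text{for all } \bm{y}\in\Omega.
\end{align*}
Thus $\nabla H(\bm{x}^*) - \nabla G(\bm{x}^*) \in N_\Omega(\bm{x}^*)$, which is~\cref{eq:critical_point} with $\partial G(\bm{x}^*) = \{\nabla G(\bm{x}^*)\}$.

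\textbf{Main obstacle.} The only delicate point is Step 1. The descent inequality alone does not give criticality; what makes the argument work is the interplay of two facts. Because $\mu>0$, the gaps $\lVert \bm{x}^{(t+1)} - \bm{x}^{(t)}\rVert$ tend to zero, and because $\Omega$ is bounded, $\hat f$ is bounded below. Together these allow the pair $(\bm{x}^{(t_k)}, \bm{x}^{(t_k+1)})$ to share the same limit. Once that is in place, the rest reduces to continuity of the gradients and the closed-graph property of $N_\Omega$.
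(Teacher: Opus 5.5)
Your proof is correct, and after Step~1 it follows a genuinely different route from the paper's.

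Step~1 coincides with the first half of the paper's proof. The paper uses the explicit global bound $\hat{f} \geq -C_{\alpha}/2$ rather than compactness, which makes no difference.

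From there the paper does not pass to the limit itself. It verifies the hypotheses of the general DCA result \cite[Thm.~3(iv)]{PhamDinh1997}: boundedness of $\{\bm{x}^{(t)}\}$, boundedness of $\{\bm{s}^{(t)}\}$ (by continuity of $\nabla H$ on the compact $\Omega$), and finiteness of $\inf_{\Omega}\hat{f}$. It then obtains some $\bm{s}^* \in \partial H(\bm{x}^*) \cap \partial \tilde{G}(\bm{x}^*)$ and identifies $\bm{s}^* = \nabla H(\bm{x}^*)$ by smoothness of $H$. Finally it splits $\partial \tilde{G}(\bm{x}^*) = \partial G(\bm{x}^*) + N_{\Omega}(\bm{x}^*)$ by the subdifferential sum rule, which is valid because $G$ is finite-valued.

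You instead write the first-order optimality condition of each subproblem directly in normal-cone form, $\bm{s}^{(t)} - \nabla G(\bm{x}^{(t+1)}) \in N_{\Omega}(\bm{x}^{(t+1)})$. You close it using the vanishing gaps, continuity of $\nabla G$ and $\nabla H$, and the closed graph of $N_{\Omega}$. In effect you reprove the cited theorem in the smooth-plus-indicator setting.

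Your route is self-contained and needs neither the external theorem nor the sum rule. It also shows exactly where $\rho > \rho_{\min}$ enters: $\mu > 0$ is what forces $\bm{x}^{(t_k)}$ and $\bm{x}^{(t_k+1)}$ to share a limit.

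The paper's route is shorter given the literature. Since the cited theorem is stated for general proper closed convex components, it would also survive a nonsmooth $G$. In that route the summation step is not among the prerequisites actually verified, so it plays no role in deriving criticality; in yours it is essential.

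A minor point: \cref{alg:dca} already requires $\bm{x}^{(0)} \in \Omega$, so your remark about discarding the first index is unnecessary.
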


\begin{proof}
From the descent inequality~\cref{eq:descent} of~\cref{cor:descent}, as long as $\bm{x}^{(t+1)} \neq \bm{x}^{(t)}$, we have $\hat{f}(\bm{x}^{(t+1)}) < \hat{f}(\bm{x}^{(t)})$, and thus the objective function value decreases monotonically.
Here, since the Gaussian RBF kernel satisfies $0 < \phi_i(\bm{x}) \leq 1$, from~\cref{eq:svr-reg2-2} and the nonnegativity $\alpha_i, \alpha_i^* \geq 0$,
\begin{align}
  \hat{f}(\bm{x})
  &\geq -\sum_{i\in S_-}\alpha_i^*\phi_i(\bm{x}) \notag \\
  &\geq -\sum_{i\in S_-}\alpha_i^* \notag \\
  &= -\frac{C_{\alpha}}{2} \label{eq:lower_bound}
\end{align}
holds for any $\bm{x} \in \mathbb{R}^d$, and thus $\hat{f}$ is bounded below.
Moreover, summing the descent inequality~\cref{eq:descent} over $t = 0, 1, \dots, T-1$ yields
\begin{align*}
  \mu \sum_{t=0}^{T-1} \lVert \bm{x}^{(t+1)} - \bm{x}^{(t)} \rVert^2
  &\leq \hat{f}(\bm{x}^{(0)}) - \inf_{\bm{x}\in\Omega} \hat{f}(\bm{x}) \\
  &< +\infty.
\end{align*}
Therefore, the series $\sum_{t=0}^{\infty}\|\bm{x}^{(t+1)} - \bm{x}^{(t)}\|^2$ converges, and hence $\lVert \bm{x}^{(t+1)} - \bm{x}^{(t)} \rVert \to 0$ as $t \to \infty$.

According to~\cite[Thm.~3(iv)]{PhamDinh1997}, the prerequisites for every accumulation point of the sequence $\{\bm{x}^{(t)}\}$ to be a critical point are:
\begin{itemize}
  \item $\{\bm{x}^{(t)}\}$ and $\{\bm{s}^{(t)}\}$ are bounded;
  \item $\inf_{\bm{x}\in\Omega} \hat{f}(\bm{x})$ is finite.
\end{itemize}
In the setting of this paper, we verify that these conditions are satisfied.
First, since $\Omega$ is compact, the sequence $\{\bm{x}^{(t)}\} \subset \Omega$ is bounded.
Furthermore, since $H$ is of class $C^{\infty}$ and $\bm{s}^{(t)} = \nabla H(\bm{x}^{(t)})$, by the continuity of $\nabla H$ on the compact set $\Omega$, the sequence $\{\bm{s}^{(t)}\}$ is also bounded.
Moreover, as shown in~\cref{eq:lower_bound}, $\hat{f}$ is bounded below on $\Omega$, and $\inf_{\bm{x}\in\Omega}\hat{f}(\bm{x})$ is finite.
Therefore, the prerequisites of~\cite[Thm.~3(iv)]{PhamDinh1997} are satisfied.

Under these prerequisites, for any accumulation point $\bm{x}^*$, there exists $\bm{s}^* \in \partial H(\bm{x}^*) \cap \partial \tilde{G}(\bm{x}^*)$~\cite[Thm.~3(iv)]{PhamDinh1997}.
Since $H$ is a $C^{\infty}$ convex function, $\partial H(\bm{x}^*) = \{\nabla H(\bm{x}^*)\}$, and thus $\bm{s}^* = \nabla H(\bm{x}^*)$.
Therefore,
\begin{align*}
  \nabla H(\bm{x}^*) \in \partial \tilde{G}(\bm{x}^*)
\end{align*}
holds.
Here, since $G$ is a finite-valued convex function on $\mathbb{R}^d$, the subdifferential of $\tilde{G} = G + \chi_{\Omega}$ decomposes as
\begin{align*}
  \partial \tilde{G}(\bm{x}^*) = \partial G(\bm{x}^*) + N_{\Omega}(\bm{x}^*).
\end{align*}
Here, $N_{\Omega}(\bm{x}^*)$ is the normal cone of $\Omega$ at $\bm{x}^*$, defined by
\begin{align*}
  N_{\Omega}(\bm{x}^*) = \{\bm{\nu} \in \mathbb{R}^d \mid \langle \bm{\nu}, \bm{x} - \bm{x}^*\rangle \leq 0,\ \forall \bm{x} \in \Omega\},
\end{align*}
and since $\Omega$ is convex, $N_{\Omega}(\bm{x}^*) = \partial \chi_{\Omega}(\bm{x}^*)$.
Therefore, $\bm{x}^*$ satisfies~\cref{eq:critical_point}.
Since $\bm{x}^*$ was an arbitrary accumulation point of $\{\bm{x}^{(t)}\}$, every accumulation point is a critical point.
\end{proof}

In particular, if $\bm{x}^*$ is an interior point of $\Omega$, then $N_{\Omega}(\bm{x}^*) = \{\bm{0}\}$, and~\cref{eq:critical_point} reduces to the unconstrained critical condition $\nabla H(\bm{x}^*) \in \partial G(\bm{x}^*)$.
Note that DCA generally converges to critical points, and convergence to a global optimum is not guaranteed.

\section{Numerical Verification}\label{sec:experiments}
In this section, we numerically examine the extent to which $C_\alpha\rho$, identified in~\cref{sec:convergence} as the common leading term of the strong convexity and smoothness bounds $\mu$ and $L$, accounts for the convergence behavior of DCA.
Specifically, we apply DCA to the prediction functions of RBF-SVR trained on noisy observations from six benchmark functions, and verify the following three points:
\begin{itemize}
  \item $C_\alpha\rho$ largely accounts for both the iteration count of DCA and its dependence on the initial point;
  \item $C_\alpha\rho$ decomposes into two independent pathways, $C \to C_\alpha$ and $\gamma \to \rho$, and its primary variation is governed by the SVR hyperparameters $(C, \gamma)$;
  \item the normalized residual exhibits linearly convergent behavior.
\end{itemize}

%%%%%%%%%%%%%%%%%%%%%%%%%%%%%%%%%%
\subsection{Settings}\label{sec:settings}
%%%%%%%%%%%%%%%%%%%%%%%%%%%%%%%%%%
\subsubsection{Dataset}\label{sec:dataset}
In this paper, we generate observations by adding noise to the benchmark functions in order to build the SVR models.
In the analysis of DCA's behavior, the noise magnitude is an important factor that influences the number of support vectors and the smoothness of the model.
Therefore, assuming Gaussian noise, we generate the observations as
\begin{align*}
    y_i = f(\bm{x}_i) + \eta_i, \quad \eta_i \sim \mathcal{N}(0, \sigma_{\eta}^2).
\end{align*}
Here, using the standard deviation $s_f$ of the noise-free true values $\{f(\bm{x}_i)\}_{i=1}^n$, we define $\sigma_{\eta}$ as
\begin{align*}
    \sigma_{\eta} = \tau s_f, \quad \tau \in \mathcal{T},
\end{align*}
where $\mathcal{T} \subset \mathbb{R}_{\ge 0}$ is a prescribed finite set.
By constructing models for multiple $\tau \in \mathcal{T}$ and comparing how DCA behaves, we evaluate the effect of the noise magnitude on the convergence behavior of DCA.
In the experiments, we set the noise levels to $\mathcal{T} = \{0.0,\, 0.1,\, 0.3,\, 0.6\}$.
For each benchmark function and each noise level $\tau$, a single noisy dataset was generated and shared across all hyperparameter settings $(C, \varepsilon, \gamma)$ with the random seed fixed to $0$, so that differences across models for the same benchmark function and noise level reflect the hyperparameters rather than the noise realization.

The number of samples substantially affects both the coefficients of the learned prediction function and the number of support vectors.
On the other hand, including the sample size as another varying factor would entangle its effect with the noise level and the hyperparameters, making it difficult to isolate the contribution of each factor to DCA's behavior.
Therefore, we fix the sample size at $n = 300$.
The input points $\{\bm{x}_i\}_{i=1}^{n}$ were drawn independently from the uniform distribution on $[0,1]^2$ with the random seed fixed to $0$, and reused across all benchmark functions and hyperparameter settings.

In this experiment, we use six benchmark functions: the Branin RCOS, Himmelblau, Six-hump Camel, Rastrigin, Ackley, and Levy functions~\cite{Jamil2013,Surjanovic2013}.
Each function is linearly transformed from its original domain to $[0,1]^2$.
\Cref{fig:benchmark_landscapes} shows the landscapes of the six benchmark functions after linear mapping to the unit square $[0,1]^2$.

\begin{figure}[htbp]
  \centering
  \includegraphics[width=\textwidth]{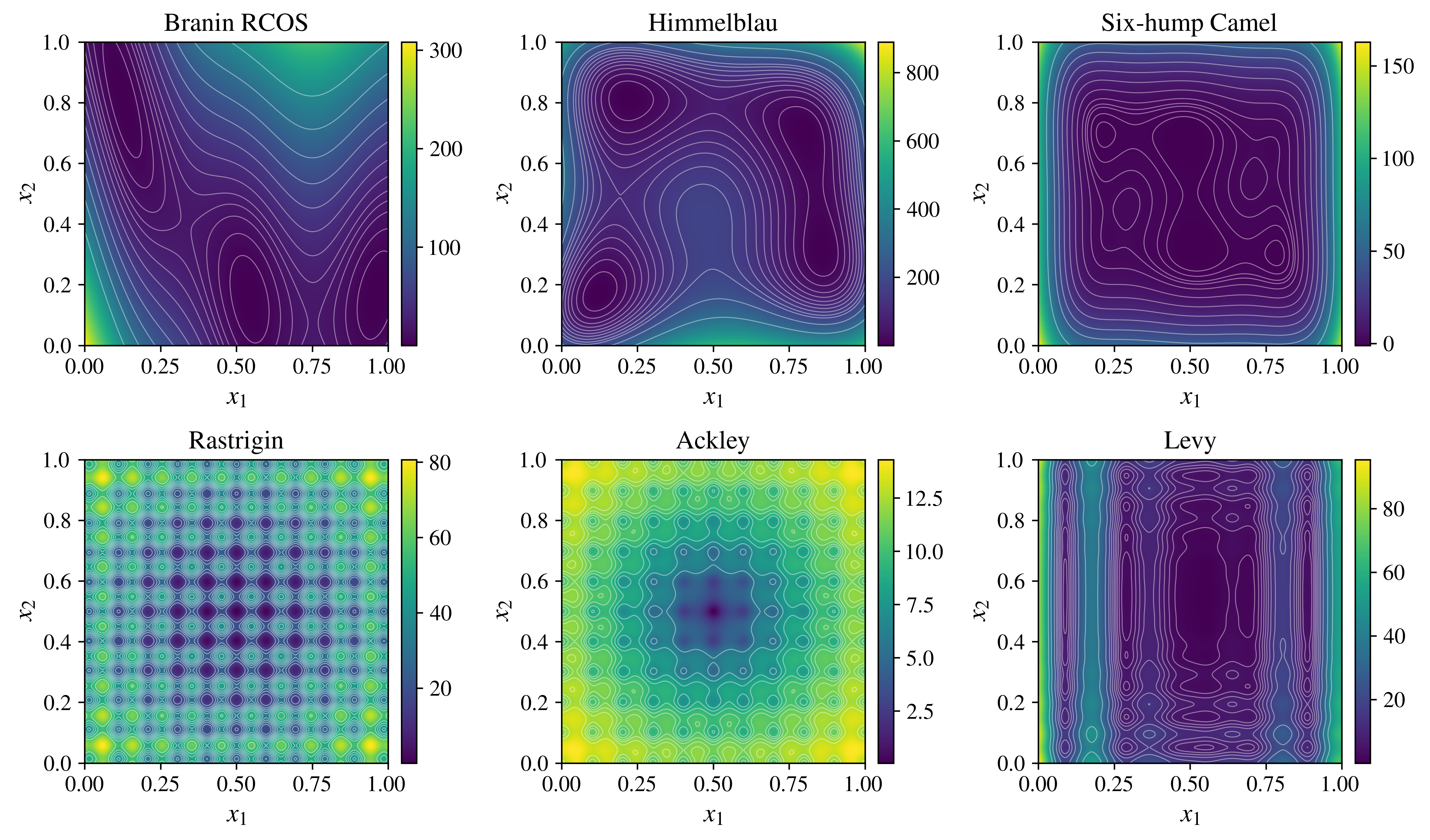}
  \caption{Heatmap visualizations of the six benchmark functions after linear mapping to the unit square $[0,1]^2$. Contour lines indicate level sets. The functions exhibit diverse multimodal structures: Branin and Himmelblau are relatively smooth with few local minima, Rastrigin and Ackley are highly multimodal with dense local structure, and Levy exhibits ridge-like features.}
  \label{fig:benchmark_landscapes}
\end{figure}

Note that while the theoretical results derived in this paper hold for any dimension $d$, the experimental verification is limited to $d = 2$.

%%%%%%%%%%%%%%%%%%%%%%%%%%%%%%%%%%
\subsubsection{SVR Model Construction}\label{sec:svr-construction}
The aim of this paper is not to improve the prediction or generalization performance of the model, but to analyze its behavior when the learned prediction function is used as the objective in optimization.
Therefore, we systematically select the SVR hyperparameters not to optimize prediction performance but to obtain a diverse set of function shapes.
For these reasons, we do not perform cross-validation for generalization assessment or use a test set for performance evaluation.

In SVR training with an RBF kernel, we specify the hyperparameter vector $\bm{\theta} = (C, \varepsilon, \gamma)^{\top}$.
Since the hyperparameters substantially affect the structure of the prediction function, it is necessary to analyze a wide range of hyperparameter values.
Therefore, we discretize the hyperparameter space $\varTheta$ to obtain a grid $\varTheta_{\mathrm{grid}} \subset \varTheta$.
Furthermore, since the noise level $\tau$ defined in~\cref{sec:dataset} affects the number of support vectors $n_{\mathrm{SV}}$, we treat it as one of the hyperparameters.
Therefore, the set of hyperparameters considered in this paper is $\mathcal{T} \times \varTheta_{\mathrm{grid}}$, and we train an SVR model for each $(\tau, \bm{\theta}) \in \mathcal{T} \times \varTheta_{\mathrm{grid}}$.
Note that before training, each input dimension is standardized to have mean 0 and variance 1, while the target values are kept in their original scale.
All subsequent SVR training, prediction, and DCA are performed in this standardized space.
We denote the collection of trained models by $\mathcal{M} = \{m(\tau, \bm{\theta}) \mid (\tau, \bm{\theta}) \in \mathcal{T} \times \varTheta_{\mathrm{grid}}\}$.

For each model, we record the quantity $C_{\alpha} = \sum_{i \in S_+} \alpha_i + \sum_{i\in S_-} \alpha^*_i$ that appears in the DCA subproblem, along with $\gamma$ and $n_{\mathrm{SV}}$.
The trained models may be unevenly distributed across the values of these metrics.
To address this, for each per-$\tau$ model set $\mathcal{M}_{\tau} = \{m(\tau, \bm{\theta}) \mid \bm{\theta} \in \varTheta_{\mathrm{grid}}\}$, we perform stratified sampling in the space of $(\log_{10} C_{\alpha}, \log_{10} \gamma, \log_{10} n_{\mathrm{SV}})$ to obtain a representative subset $\mathcal{M}^{\mathrm{rep}}_{\tau}$.
Specifically, each of the three log-scaled metrics is divided into $l$ strata by quantiles, and from each non-empty stratum we extract the $k$ models whose Euclidean distance to the stratum center, defined as the midpoint of its bin edges, is smallest.
Working in log scale places these metrics, which span several orders of magnitude, on a common scale, so that this selection is not dominated by any single axis.
If quantile ties leave a stratum empty, it is skipped.

Under this setup, the specific parameter values are as follows.
The hyperparameter grid is the Cartesian product of
\begin{align*}
  C      &\in \{10^{s} \mid s \in [-1,\, 2],\; \text{10 equally spaced}\}, \\
  \varepsilon &\in \{0.01,\, 0.1,\, 1.0\}, \\
  \gamma &\in \{10^{s} \mid s \in [-2,\, 1],\; \text{20 equally spaced}\},
\end{align*}
totaling $10 \times 3 \times 20 = 600$ combinations.
For stratified sampling, we use $l = 6$ strata per axis and $k = 2$ models per stratum.

%%%%%%%%%%%%%%%%%%%%%%%%%%%%%%%%%%
\subsubsection{DCA Implementation}
In DCA, we solve the subproblem~\cref{eq:dca_subprob} based on the current iterate $\bm{x}^{(t)}$ to obtain the next iterate $\bm{x}^{(t+1)}$.
In this paper, since the input of the prediction function is defined on $[0,1]^d$, we impose the box constraint $\bm{x} \in [0,1]^d$.
That is, the subproblem solved at each iteration is given by
\begin{align*}
  \bm{x}^{(t+1)} \in \argmin_{\bm{x}\in [0,1]^d} F^{(t)}(\bm{x}).
\end{align*}
Note that the above formulation is stated in the original input space $\bm{x} \in [0,1]^d$.
Owing to the standardization described in~\cref{sec:svr-construction}, the numerical computation is carried out equivalently in the standardized space, with the box constraint transformed into the corresponding interval constraint.

For the numerical computation, we use SciPy~\cite{Virtanen2020} and adopt L-BFGS-B for solving the subproblem.
Since L-BFGS-B is a gradient-based method, it requires the gradient of the objective function.
The function $F^{(t)}$ used here is analytically differentiable, and its gradient is given in closed form as
\begin{align*}
  \nabla F^{(t)}(\bm{x}) = C_{\alpha}\rho \bm{x} - \bm{s}^{(t)}
  + 2\gamma\sum_{i\in S_-} \alpha^*_i (\bm{x}-\bm{x}_i)
  \exp(-\gamma \lVert \bm{x}-\bm{x}_i\rVert^2).
\end{align*}
Therefore, in the experiments, we use this analytical gradient instead of numerical differentiation.
When $\rho > \rho_{\min}$, the subproblem is strongly convex by~\cref{thm:mu}, and thus has a unique global optimum.
In the numerical computation, we approximate this unique optimum using L-BFGS-B.
That is, the present implementation corresponds to an approximation of the exact-subproblem case considered in~\cref{sec:convergence}.
Henceforth, we proceed under the assumption that each subproblem is solved with sufficient accuracy, and interpret the experimental results as empirical observations of the theoretical results in~\cref{sec:convergence}.
For the L-BFGS-B parameter settings, we use the SciPy~\cite{Virtanen2020} defaults, except for the tolerance, which we set to $\mathrm{ftol} = 10^{-6}$.

For the convergence criterion of DCA, we use both the change in the iterate and the change in the objective value, declaring convergence when
\begin{align*}
    \lVert \bm{x}^{(t)} - \bm{x}^{(t-1)}\rVert \leq \varepsilon_x
    \quad \text{and} \quad
    \lvert \hat{f}(\bm{x}^{(t)}) - \hat{f}(\bm{x}^{(t-1)}) \rvert \leq \varepsilon_f
\end{align*}
hold simultaneously.
Specifically, we set $\varepsilon_x = \varepsilon_f = 10^{-6}$.
We set the maximum number of iterations to $T = 10^4$ and treat the case $t = T$ as a non-convergent run.

In this experiment, we set the DC decomposition parameter to
\begin{align}
  \rho = (1 + 10^{-10}) \cdot \rho_{\min}. \label{eq:rho_setting}
\end{align}
This is approximately equal to the critical value for the convexity guarantee in~\cref{eq:rho_min}.
Under this setting, the relative influence of the quadratic proximal term is minimized and the kernel sum term in the subproblem becomes relatively more dominant.
However, the coefficient $\mu = C_{\alpha}\{\rho - 2\gamma\exp(-\tfrac{3}{2})\}$ in the descent inequality~\cref{eq:descent} becomes extremely small.
Therefore, as long as $\mu > 0$, the monotonic decrease of the objective value is guaranteed under the above assumption, but the quantitative lower bound on the descent provided by~\cref{eq:descent} is close to trivial.
If $\rho$ is set larger, $\mu$ increases and the descent guarantee becomes stronger; however, the quadratic proximal term becomes dominant, suppressing the step taken at each iteration.
Thus, the choice of $\rho$ involves a trade-off between the strength of the descent guarantee and the step size, and the optimal setting is left for future investigation.

%%%%%%%%%%%%%%%%%%%%%%%%%%%%%%%%%%
\subsubsection{Initial Point Selection}
Since DCA is a local optimization algorithm, the choice of the initial point can affect both the convergence behavior and the solution obtained.
The aim of this paper is to evaluate the relationship between the shape of the learned SVR prediction function, which is governed by $C_\alpha$ and $\gamma$, and the convergence behavior of DCA.
Therefore, if the initial point selection depended on the bias of training samples or the shape of the true function, effects from the initial points would contaminate the evaluation, making the interpretation of parameter dependence difficult.
To avoid this issue, we fix the initial points based solely on the search domain $[0,1]^2$, using the four quartile grid points given by the Cartesian product of $0.25$ and $0.75$ in each dimension together with the center $(0.5, 0.5)$, for a total of five points.
Specifically, the set of initial points is given by
\begin{align*}
  \mathcal{X}_0 =
  \{(0.5,0.5)\}\cup\{(q_1,q_2)\mid q_1,q_2 \in\{0.25,0.75\}\}.
\end{align*}

The subproblem~\cref{eq:dca-subprob3} consists of a proximal quadratic term centered at $\bm{v}^{(t)}$, defined by~\cref{eq:v_def}, and the kernel sum $-\sum_{i\in S_-} \alpha^*_i \phi_i(\bm{x})$ over the sample set $S_-$.
If we consider only the proximal quadratic term, the optimum coincides with its center $\bm{v}^{(t)}$, while the kernel sum displaces this center.
Since this displacement strongly depends on the number of training samples and the SVR hyperparameters, we adopt $\bm{v}^{(t)}$ as the initial point for the subproblem search.

%%%%%%%%%%%%%%%%%%%%%%%%%%%%%%%%%%%%%%%%%%
\subsection{Evaluation Metrics}\label{sec:metrics}
In this experiment, we define metrics to evaluate the convergence properties of DCA.
For each model $m \in \mathcal{M}^{\mathrm{rep}}_{\tau}$ and each initial point $\bm{x}_0 \in \mathcal{X}_0$, we use the number of iterations $n_{\mathrm{iter}}(\bm{x}_0; m)$ required for DCA to meet the convergence criterion as a measure of the convergence properties.
Note that if the maximum number of iterations $T$ is reached, we set $n_{\mathrm{iter}}(\bm{x}_0; m) = T$.

Moreover, since DCA is a local optimization algorithm, the solution reached and the convergence properties may differ depending on the initial point selection.
Therefore, as a metric to evaluate the convergence difficulty of DCA for model $m$, we use the median number of iterations over the initial points,
\begin{align*}
    \bar{n}_{\mathrm{iter}}(m) = \median_{\bm{x}_0 \in \mathcal{X}_0}\, n_{\mathrm{iter}}(\bm{x}_0; m).
\end{align*}
Furthermore, as a metric of the initial-point dependence for model $m$, we define the range of the convergence property over the $|\mathcal{X}_0|$ initial points as
\begin{align*}
    R(m) = \max_{\bm{x}_0 \in \mathcal{X}_0} n_{\mathrm{iter}}(\bm{x}_0; m) - \min_{\bm{x}_0 \in \mathcal{X}_0} n_{\mathrm{iter}}(\bm{x}_0; m).
\end{align*}
A larger $R(m)$ indicates that the convergence behavior of DCA for model $m$ depends more strongly on the initial point.

When comparing convergence curves, a direct comparison is difficult because the scale of the objective value differs across models.
Therefore, for the sequence $\{\hat{f}(\bm{x}^{(t)})\}_{t=0}^{T}$ obtained from each initial point, we define the normalized residual as
\begin{align}
    \delta^{(t)}(\bm{x}_0; m) = \frac{\hat{f}(\bm{x}^{(t)}) - \hat{f}(\bm{x}^{(T)}(\bm{x}_0))}{\hat{f}(\bm{x}^{(0)}) - \hat{f}(\bm{x}^{(T)}(\bm{x}_0))}.
    \label{eq:normalized_residual}
\end{align}
Here, the reference value $\hat{f}(\bm{x}^{(T)}(\bm{x}_0))$ in the numerator and denominator is the final value reached from that initial point itself.
By this definition, $\delta^{(0)} = 1$ and $\delta^{(T)} = 0$, allowing the convergence process to be compared across different models on the $[0, 1]$ scale.
We use the final value $\hat{f}(\bm{x}^{(T)}(\bm{x}_0))$ because DCA is a local optimization algorithm and may converge to different local solutions depending on the initial point.
Therefore, even when the local solutions reached differ, the convergence process for each initial point can be compared on a common scale.

In the experiments, we compare the convergence behavior across models using the median number of iterations $\bar{n}_{\mathrm{iter}}(m)$, the initial-point dependence $R(m)$, and the normalized residual $\delta^{(t)}(\bm{x}_0; m)$.

%%%%%%%%%%%%%%%%%%%%%%%%%%%%%%%%%%%%%%%%%%
\subsection{Results and Discussion}\label{sec:results}
In our experimental setting, $183$ out of a total of $30{,}510$ instances ($0.60\%$) reached the maximum iteration count $T$.
These are included in the subsequent analysis with $n_{\mathrm{iter}} = T$, but since the proportion is sufficiently small, their impact on the statistical analysis is limited.

\subsubsection{Convergence Behavior and $C_{\alpha}\rho$}
We first verify that $C_\alpha\rho$ is the primary single quantity accounting for the convergence behavior of DCA.
\Cref{fig:median_vs_Calp_rho} shows a scatter plot of $\bar{n}_{\mathrm{iter}}(m)$ for each benchmark function.
To quantitatively capture the trend, we divided the horizontal axis $\log_{10} C_\alpha\rho$ into 15 bins with an equal number of samples and computed the median and interquartile range for each bin.
Note that bins containing fewer than 5 samples were excluded from the aggregation.

From~\cref{fig:median_vs_Calp_rho}, we observe that in all six functions, $\bar{n}_{\mathrm{iter}}(m)$ tends to increase approximately monotonically as $C_\alpha\rho$ increases.
Furthermore, the markers for different noise levels $\tau$ broadly overlap along the same trend and show no systematic separation by $\tau$.
That is, although the noise level changes the number of support vectors and the coefficient structure, its effect appears to be aggregated into $C_\alpha\rho$ rather than serving as an independent explanatory variable for $\bar{n}_{\mathrm{iter}}(m)$.
The Spearman rank correlation shown later in~\cref{tab:spearman} confirms this quantitatively.
This observation is consistent with the role of $C_\alpha\rho$ as the aggregate curvature scale of the subproblem, established in~\cref{sec:convergence}.

\Cref{fig:range_vs_Calp_rho} shows a scatter plot with $R(m)$ on the vertical axis.
From this plot, we observe that the larger $C_\alpha\rho$ is, the greater the variability in the number of iterations across initial points.
This indicates that the differences in step size due to the initial point also tend to accumulate, so that $R(m)$ likewise increases.

These results show that $C_\alpha\rho$ largely accounts for both the median number of iterations $\bar{n}_{\mathrm{iter}}(m)$ and the initial-point dependence $R(m)$ of DCA.

\begin{figure}[tb]
  \centering
  \includegraphics[width=\textwidth]{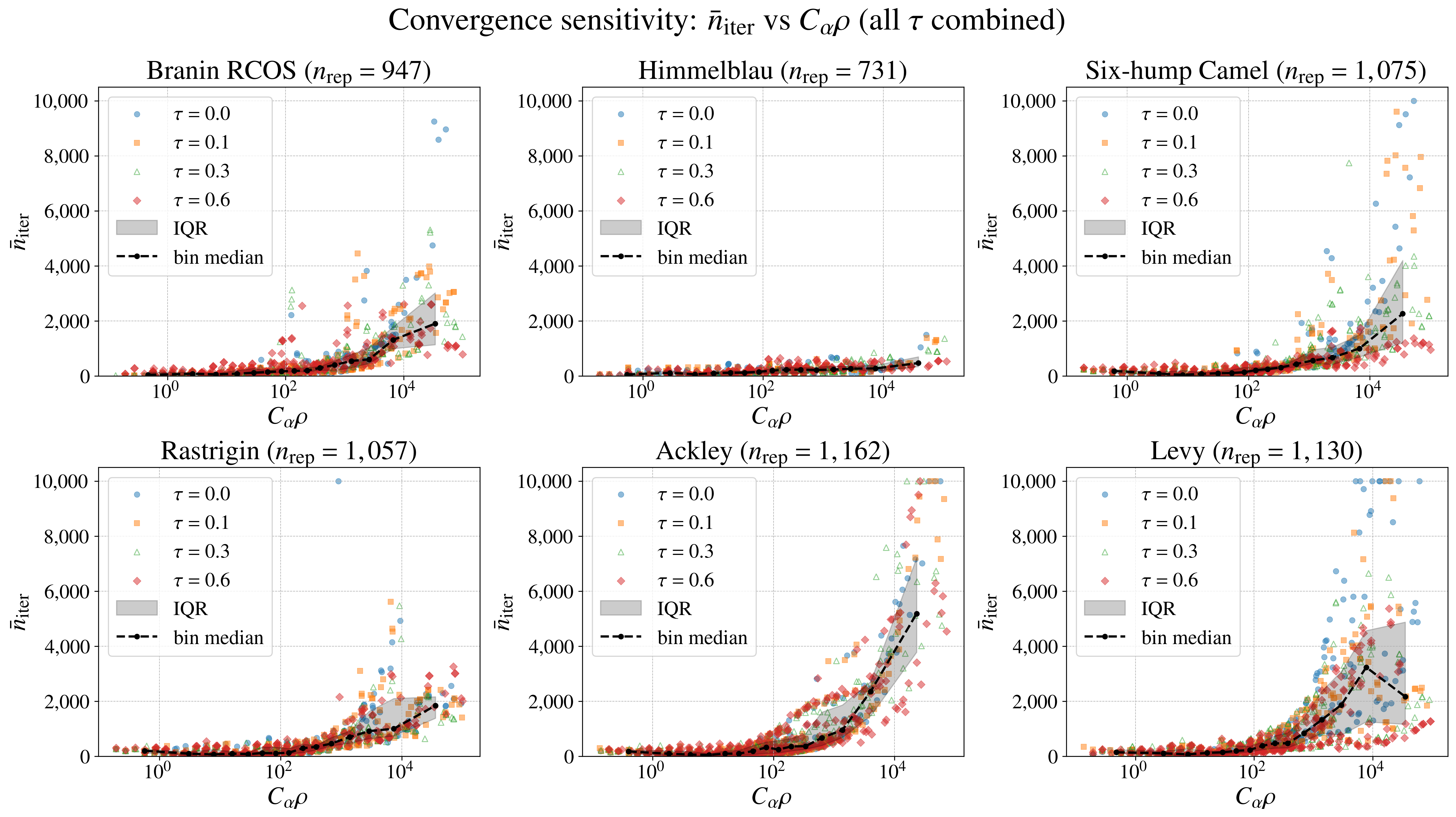}
  \caption{Scatter plot of $\bar{n}_{\mathrm{iter}}(m)$ vs.\ $C_{\alpha}\rho$ for all $\tau \in \mathcal{T}$.
  The dashed line indicates the bin median, and the shaded band indicates the interquartile range.
  In each panel title, $n_{\mathrm{rep}}$ denotes the total number of representative models over all $\tau \in \mathcal{T}$.}
  \label{fig:median_vs_Calp_rho}
\end{figure}

\begin{figure}[tb]
  \centering
  \includegraphics[width=\textwidth]{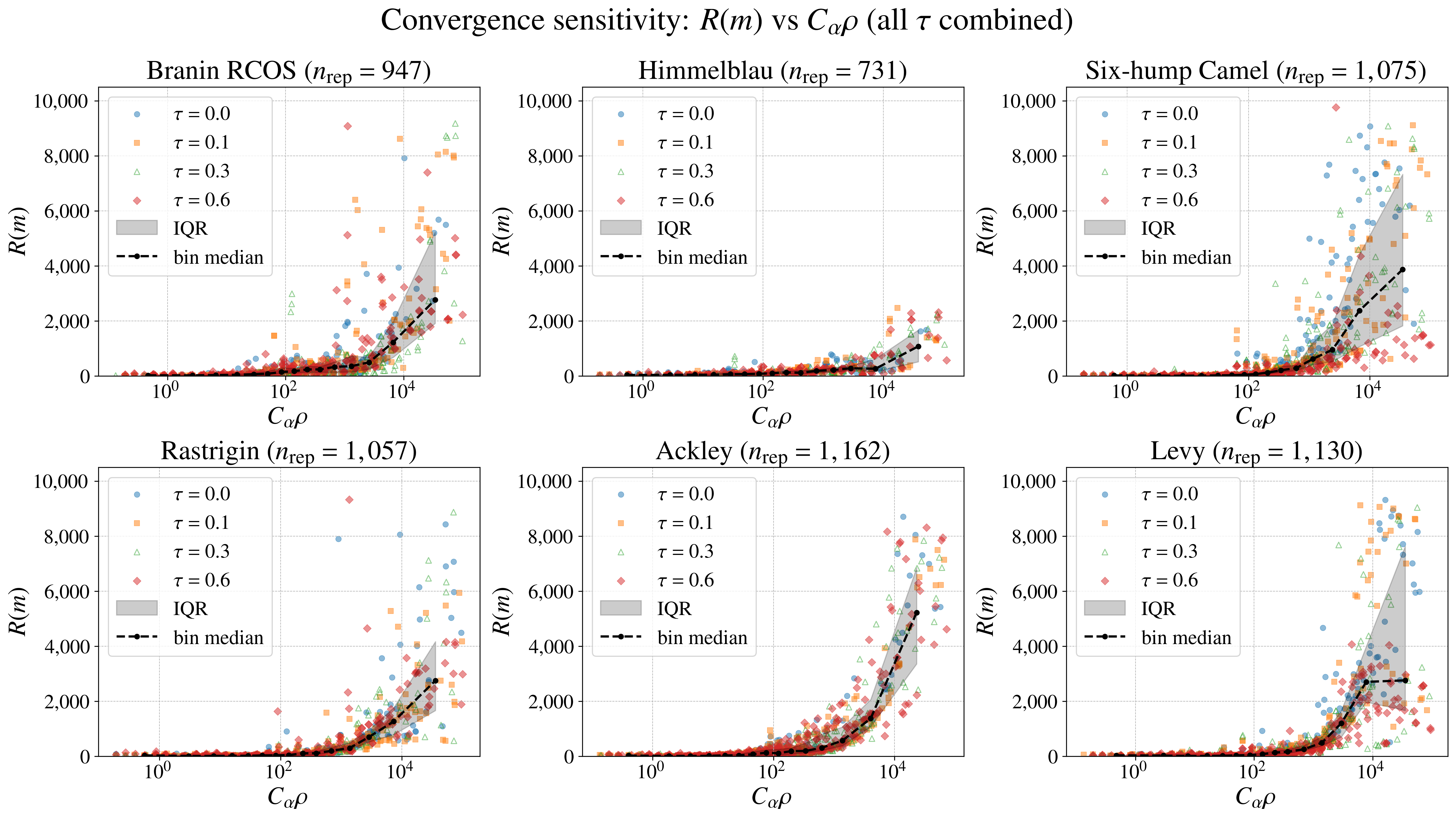}
  \caption{Scatter plot of $R(m)$ vs.\ $C_{\alpha}\rho$ for all $\tau \in \mathcal{T}$.
  Plotting conventions are as in~\cref{fig:median_vs_Calp_rho}.}
  \label{fig:range_vs_Calp_rho}
\end{figure}

%%%%%%%%%%%%%%%%%%%%%%%%%%%%%%%%%%
\subsubsection{Convergence Behavior in the $(C, \gamma)$ Space}
Next, we examine the distribution of $\bar{n}_{\mathrm{iter}}(m)$ over the space of the SVR hyperparameters $(C, \gamma)$, which directly affect $C_\alpha\rho$.
\Cref{fig:heatmap} shows a heatmap of the median of $\bar{n}_{\mathrm{iter}}(m)$ for each combination of $(C, \gamma)$.
Since~\cref{fig:median_vs_Calp_rho} indicated that $C_\alpha\rho$ largely accounts for $\bar{n}_{\mathrm{iter}}(m)$ and that $\tau$ has no independent effect, we aggregate all noise levels $\tau \in \mathcal{T}$ in this analysis.

In all six functions, $\bar{n}_{\mathrm{iter}}(m)$ tends to increase in the region where both $C$ and $\gamma$ are large.
This can be understood as a dual contribution to $C_\alpha\rho$: an increase in $C$ raises the upper bound of the support vector coefficients $\alpha_i, \alpha_i^*$ and thereby increases $C_\alpha$, while an increase in $\gamma$ directly raises the lower bound of $\rho$ in~\cref{eq:rho_setting}.

On the other hand, when only one of $C$ and $\gamma$ is large, $\bar{n}_{\mathrm{iter}}(m)$ does not necessarily increase.
This can be explained by the training structure of SVR.
When $C$ is large but $\gamma$ is small, the kernel is wide, so the prediction function is smooth; even if the upper bound of the coefficients is high, cancellation among support vectors readily occurs, and $C_\alpha$ does not easily grow to its upper bound.
Conversely, when $\gamma$ is large but $C$ is small, the kernel is narrow and the prediction function becomes complex, but since each coefficient is capped at $C$, $C_\alpha$ itself is constrained.
As a result, $C_\alpha\rho$ becomes large only in the region where both $C$ and $\gamma$ are large, and the concentration in the upper-right region seen in~\cref{fig:heatmap} reflects this structure.
However, in the two-axis representation of~\cref{fig:heatmap}, model groups with similar $C_\alpha\rho$ but different combinations of $(C, \gamma)$ are dispersed across different cells, so the effect of $C_\alpha\rho$ cannot be directly isolated.
\begin{figure}[tb]
  \centering
  \includegraphics[width=\textwidth]{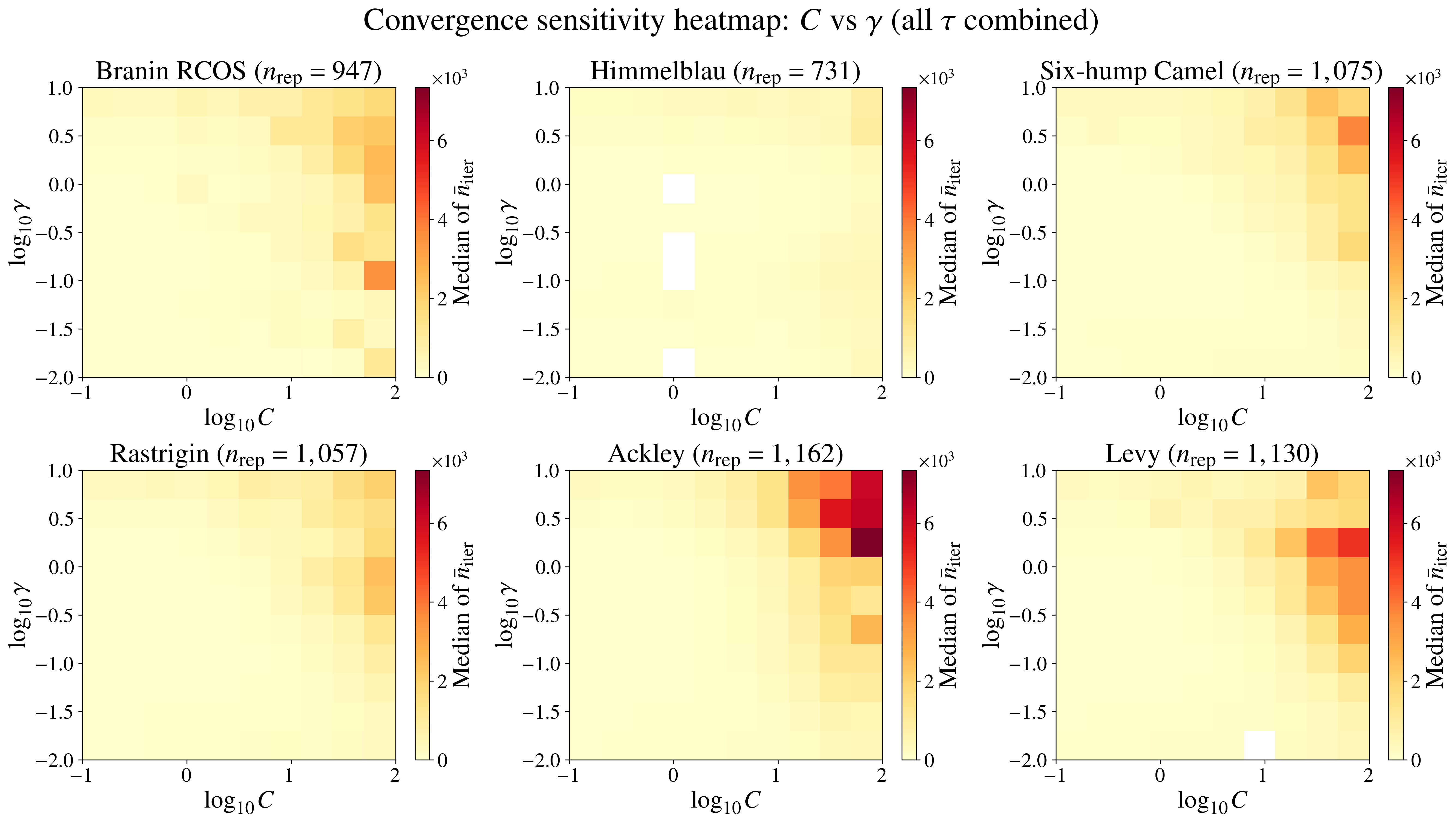}
  \caption{Median of $\bar{n}_{\mathrm{iter}}(m)$ in the $(C, \gamma)$ space for all $\tau \in \mathcal{T}$.
  As in~\cref{fig:median_vs_Calp_rho}, $n_{\mathrm{rep}}$ in each panel title is the total number of representative models over all $\tau \in \mathcal{T}$.}
  \label{fig:heatmap}
\end{figure}

%%%%%%%%%%%%%%%%%%%%%%%%%%%%%%%%%%
\subsubsection{Quantitative Evaluation via Spearman Rank Correlation}
To quantitatively evaluate the relationship between $C_\alpha\rho$ and $\bar{n}_{\mathrm{iter}}(m)$, \cref{tab:spearman} presents the Spearman rank correlation coefficient $r_s$ between $\bar{n}_{\mathrm{iter}}(m)$ and each of $C_\alpha\rho$, $C$, $\gamma$, $\varepsilon$, and $\tau$.
For all six functions, $r_s(C_\alpha\rho)$ lies in the range $0.633$--$0.835$, indicating that $C_\alpha\rho$ is consistently strongly correlated with $\bar{n}_{\mathrm{iter}}(m)$.
In addition, $r_s(C)$ lies in the range $0.557$--$0.832$ and approaches $r_s(C_\alpha\rho)$ for some functions.
For example, Levy gives $r_s(C_\alpha\rho) = 0.835$ while $r_s(C) = 0.832$, and for Ackley $r_s(C) = 0.797$ slightly exceeds $r_s(C_\alpha\rho) = 0.792$.
This stems from the strong $C \to C_\alpha$ pathway shown later in~\cref{tab:Calpha_corr}, under which $C$ alone already captures much of the variation in $\bar{n}_{\mathrm{iter}}(m)$.
In contrast, $r_s(\gamma)$ is relatively weak, ranging from $0.179$ to $0.479$.
This is because $\gamma$ acts on $C_\alpha\rho$ only through $\rho$, and its contribution to the number of iterations is smaller than that of the $C \to C_\alpha$ pathway.
Furthermore, $r_s(\varepsilon)$ is $-0.158$--$0.018$ and $r_s(\tau)$ is $-0.096$--$0.130$, both essentially zero across all functions, which quantitatively confirms that the hyperparameter $\varepsilon$ and the noise level $\tau$ do not serve as independent explanatory variables for $\bar{n}_{\mathrm{iter}}(m)$.
This is consistent with the visual observation in~\cref{fig:median_vs_Calp_rho} and confirms that even if $\tau$ and $\varepsilon$ affect the coefficients of the prediction function through the support vector structure, their effect is aggregated into the composite quantity $C_\alpha\rho$.
Taken together, these results show that the composite quantity $C_\alpha\rho$ is the primary single index summarizing the convergence difficulty of DCA.

\begin{table}[tb]
  \centering
  \caption{Spearman rank correlation coefficients with $\bar{n}_{\mathrm{iter}}(m)$. 
           The column $n_{\mathrm{rep}}$ gives the number of representative models. 
           Here, $r_s(C_{\alpha}\rho)$, $r_s(C)$, and $r_s(\gamma)$ satisfied $p < 0.001$ for all functions.
           The absolute values of $r_s(\varepsilon)$ and $r_s(\tau)$ did not exceed $0.158$, negligible compared with $C_{\alpha}\rho$.}
  \label{tab:spearman}
  \begin{tabular}{lrrrrrr}
    \toprule
    Function & $n_{\mathrm{rep}}$ & $r_s(C_{\alpha}\rho)$ & $r_s(C)$ & $r_s(\gamma)$ & $r_s(\varepsilon)$ & $r_s(\tau)$ \\
    \midrule
    Branin         &  947    & 0.822 & 0.662 & 0.479 & $-$0.094 & 0.119 \\
    Himmelblau     &  731    & 0.633 & 0.557 & 0.347 & 0.018 & 0.051 \\
    Six-hump Camel & 1{,}075 & 0.811 & 0.685 & 0.376 & $-$0.096 & 0.018 \\
    Rastrigin      & 1{,}057 & 0.777 & 0.754 & 0.283 & $-$0.035 & 0.015 \\
    Ackley         & 1{,}162 & 0.792 & 0.797 & 0.179 & $-$0.158 & 0.130 \\
    Levy           & 1{,}130 & 0.835 & 0.832 & 0.282 & $-$0.086 & $-$0.096 \\
    \bottomrule
  \end{tabular}
\end{table}

Next, to examine how well the post-training quantity $C_\alpha\rho$ can be estimated before training, \cref{tab:Calpha_corr} presents the Spearman rank correlation coefficients of $C$ and $\gamma$ with $C_\alpha$.
Although $C_\alpha\rho$ is, strictly speaking, a quantity determined by the dual coefficients obtained after training the SVR, its structure decomposes into two independent pathways through the SVR hyperparameters $(C, \gamma)$.

First, $C_\alpha$ is almost completely governed by $C$.
As shown in~\cref{tab:Calpha_corr}, $r_s(C, C_\alpha) \geq 0.955$ for all six functions, and notably reaches $0.990$ or higher for five of them.
This stems from the fact that, in the dual structure of SVR, the bound support vectors satisfy $\alpha_i = C$ or $\alpha_i^* = C$, so that the scaling of $C$ carries over directly to $C_\alpha$.
Second, $\rho$ is directly proportional to $\gamma$ through its lower bound~\cref{eq:rho_min} and the experimental setting~\cref{eq:rho_setting}.
Therefore, $\gamma$ acts on $C_\alpha\rho$ through $\rho$.
In contrast, as shown in~\cref{tab:Calpha_corr}, the absolute value of $r_s(\gamma, C_\alpha)$ remains at most $0.112$ for all six functions, indicating that $\gamma$ has essentially no effect on $C_\alpha$ itself.

\begin{table}[tb]
  \centering
  \caption{Spearman rank correlation coefficients of $C$ and $\gamma$ with $C_{\alpha}$.
           While $C$ almost completely governs $C_{\alpha}$, $\gamma$ has almost no effect on $C_{\alpha}$.}
  \label{tab:Calpha_corr}
  \begin{tabular}{lrr}
    \toprule
    Function & $r_s(C, C_{\alpha})$ & $r_s(\gamma, C_{\alpha})$ \\
    \midrule
    Branin         & 0.990 & $-$0.076 \\
    Himmelblau     & 0.991 & $-$0.066 \\
    Six-hump Camel & 0.991 & $-$0.112 \\
    Rastrigin      & 0.992 & $-$0.092 \\
    Ackley         & 0.955 & $-$0.090 \\
    Levy           & 0.992 & $-$0.071 \\
    \bottomrule
  \end{tabular}
\end{table}

In this way, the composite quantity $C_\alpha\rho$ separates into two independent pathways, $C \to C_\alpha$ and $\gamma \to \rho$, both of which are governed by the pre-training hyperparameters.
Although the exact value of $C_\alpha\rho$ can be computed only after training the SVR, its primary variation is predictable from the pre-training hyperparameters $(C, \gamma)$.
This is also consistent with \cref{tab:spearman}, where $C$ exhibits a moderate-to-strong correlation with $\bar{n}_{\mathrm{iter}}(m)$ and $\gamma$ exhibits a weak-to-moderate one.
That is, even at the model design stage, before the SVR is trained, the convergence properties of DCA can be estimated through the choice of $C$ and $\gamma$.

%%%%%%%%%%%%%%%%%%%%%%%%%%%%%%%%%%
\subsubsection{Convergence Curves by $C_{\alpha}\rho$ Band}
Finally, we directly visualize how the convergence process differs with the magnitude of $C_\alpha\rho$.
We partition all models into three groups by the terciles of $C_\alpha\rho$, that is, the 33.3 and 66.7 percentiles: low (below the lower tercile), mid (between the two terciles), and high (at or above the upper tercile).
\Cref{fig:convergence_curves} shows the evolution of the normalized residual $\delta^{(t)}$ defined in~\cref{eq:normalized_residual} at the center initial point $(0.5, 0.5)$.
Each line indicates the median of the corresponding group, and the shaded band indicates the interquartile range.

In all six functions, the low group converges fastest and the high group slowest.
This ordering is consistent with the theoretical prediction in~\cref{sec:convergence}.
Furthermore, the vertical axis of~\cref{fig:convergence_curves} is logarithmic, and the nearly linear decrease of the median of each group suggests an exponential decay of $\delta^{(t)}$ with the number of iterations, that is, approximately linear convergence.
The descent inequality~\cref{eq:descent} is a guarantee on the amount of decrease in the objective value at each iteration, and does not directly characterize the convergence rate of $\delta^{(t)}$.
In particular, under the $\rho$ setting~\cref{eq:rho_setting} adopted in this experiment, the coefficient $\mu$ of the descent inequality is extremely small, so the lower bound provided by~\cref{eq:descent} is extremely weak.
On the other hand, the approximately linearly convergent behavior empirically observed for $\delta^{(t)}$ suggests that the actual convergence is better than the estimate given by~\cref{eq:descent}.

In addition, the bands are markedly wider for the high group, indicating that the variability of convergence properties across models is large in the region where $C_\alpha\rho$ is high.
This is likely because, in the region where $C_\alpha\rho$ is large, the number of iterations until convergence is large, so that slight structural differences across models accumulate and manifest as the final variability.
The same trend was also confirmed for the other four initial points.

\begin{figure}[tb]
  \centering
  \includegraphics[width=\textwidth]{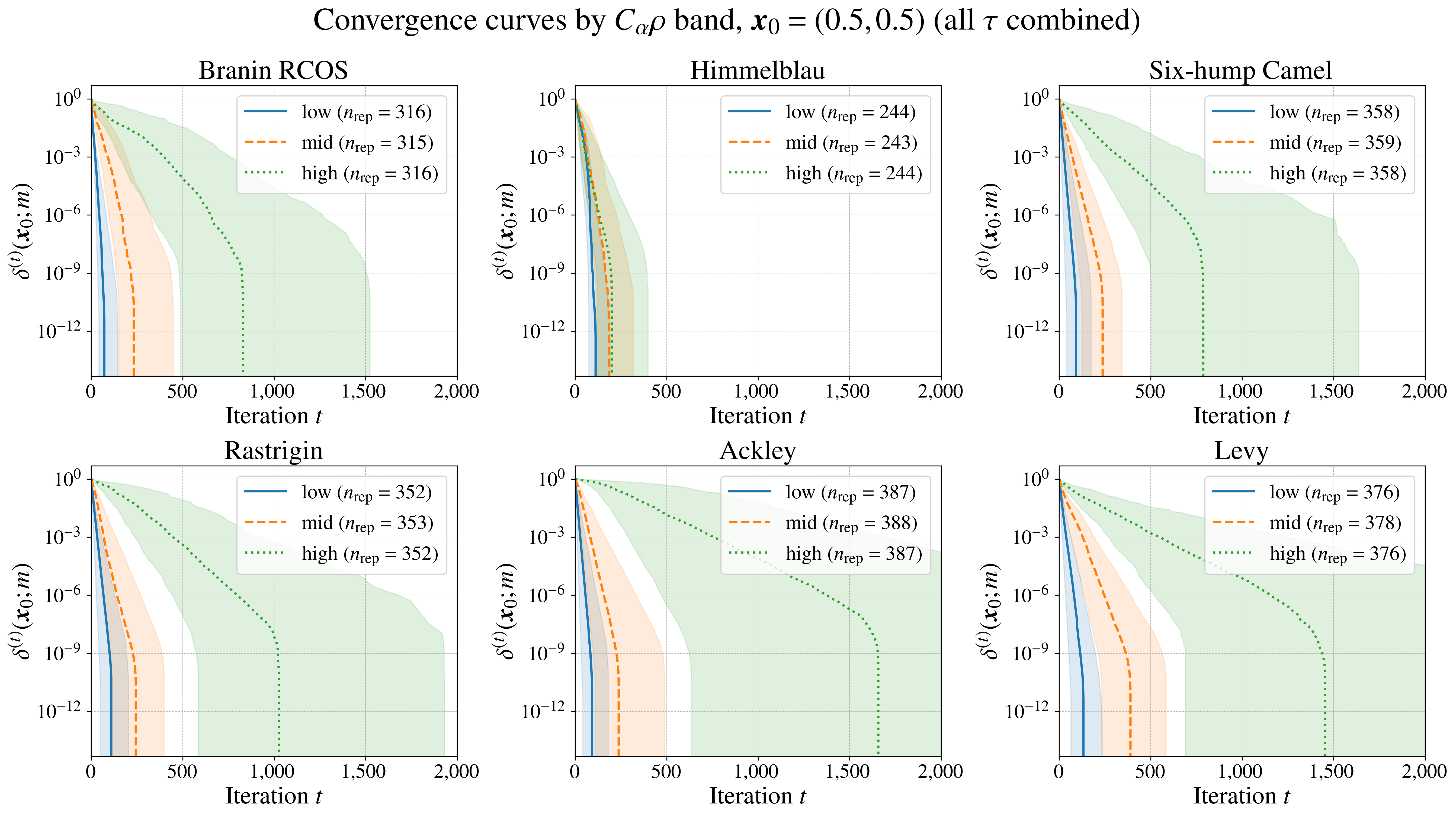}
  \caption{Evolution of the normalized residual $\delta^{(t)}$ by $C_{\alpha}\rho$ band (center initial point). 
           Each line indicates the median, and the shaded band indicates the interquartile range.
           In the legend, $n_{\mathrm{rep}}$ is the number of representative models in each $C_{\alpha}\rho$ group.}
  \label{fig:convergence_curves}
\end{figure}

Across six benchmark functions and four noise levels, these experimental results show that $C_\alpha\rho$ is a single composite index that captures both the convergence speed and the initial-point dependence of DCA.
This is also consistent with the analytical structure shown in~\cref{sec:convergence}, namely that the lower bound $\mu$ of the strong convexity parameter and the upper bound $L$ of the gradient Lipschitz constant of the DC components $G$ and $H$ are both governed by the leading term $C_\alpha\rho$.
That is, we confirm that the theoretical result---that both the strong convexity and the smoothness of the subproblem are characterized by $C_\alpha\rho$---manifests as the experimental observation that $C_\alpha\rho$ largely accounts for the number of iterations and the initial-point dependence of DCA.

The quantity $C_\alpha\rho$ can be computed in closed form from the kernel parameter $\gamma$ and the training result before the DCA iterations begin.
Furthermore, as shown in~\cref{tab:Calpha_corr}, the primary variation of $C_\alpha\rho$ can be estimated from the hyperparameters $(C, \gamma)$ alone, without training the SVR.
Therefore, the difficulty of convergence, that is, whether DCA requires many iterations or is sensitive to the choice of the initial point, can be assessed in advance at two levels: approximately from $(C, \gamma)$ at the design stage before training, and exactly in closed form after training.
This is a practical advantage when designing an RBF-SVR surrogate that is to be optimized by DCA.

%%%%%%%%%%%%%%%%%%%%%%%%%%%%%%%%%%%%%%%%%%
\section{Conclusion}\label{sec:conclusion}
While RBF-SVR has high approximation capability, the resulting prediction function is nonlinear and nonconvex, and its optimization is generally difficult.
In this paper, we analyzed the DC decomposition and the structure of the subproblem when DCA is applied to such a prediction function, and obtained the following results.

Theoretically, by exploiting the structure of the Hessian of the RBF kernel, we derived in closed form the lower bound $\mu$ of the strong convexity parameter and the upper bound $L$ of the gradient Lipschitz constant of the DC components $G$ and $H$.
Both of these analytical constants are governed by the common leading term $C_{\alpha}\rho$.
That is, the assessment of the convergence properties of DCA rests primarily on the single quantity $C_{\alpha}\rho$, which can be evaluated in closed form from the SVR hyperparameters and the training result before the optimization is performed.

Experimentally, we showed on six benchmark functions that $C_{\alpha}\rho$ is the primary single quantity characterizing the convergence properties and the initial-point dependence of DCA.
Furthermore, we showed that $C_{\alpha}\rho$ decomposes into two independent pathways, $C \to C_{\alpha}$ and $\gamma \to \rho$, and that its primary variation is governed by the SVR hyperparameters $(C, \gamma)$.
We also showed that the difficulty of DCA convergence can be assessed in advance at two levels: approximately from $(C, \gamma)$ at the design stage before training, and exactly in closed form after training.

These results imply that $C_{\alpha}\rho$ not only serves as a pre-diagnosis index for DCA but, when the SVR prediction function is designed as a surrogate, can also provide a guideline for selecting hyperparameters that accounts for the optimization difficulty.
In addition, the gradient Lipschitz upper bound $L$ derived in this paper provides an analytical reference value for the smoothness constant required by DC-FW~\cite{Maskan2025,Pokutta2025} when it is applied to the minimization of the RBF-SVR prediction function.
An empirical study of DC-FW on the RBF-SVR prediction function using this analytical constant is left for future work.

\appendix
\section*{Appendix}
\addcontentsline{toc}{section}{Appendix}
\section{Eigenvalue bounds for the Hessian of the Gaussian RBF kernel}

\begin{lemma}[Eigenvalue bounds for the Hessian of the Gaussian RBF kernel] \label{lem:kernel_eig}
Let $\gamma > 0$. For the RBF kernel $\phi_i(\bm{x}) = \exp(-\gamma\lVert\bm{x}-\bm{x}_i\rVert^2)$, the inequalities
\begin{align*}
  \lambda_{\max}(\nabla^2 \phi_i(\bm{x})) &\leq 4\gamma \exp\!\left(-\tfrac{3}{2}\right), \\
  \lambda_{\min}(\nabla^2 \phi_i(\bm{x})) &\geq -2\gamma,
\end{align*}
hold for all $\bm{x} \in \mathbb{R}^d$.
\end{lemma}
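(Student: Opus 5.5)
We compute the Hessian explicitly and read off its spectrum. Write $\bm{u} = \bm{x}-\bm{x}_i$ and $r = \lVert\bm{u}\rVert^2$. Differentiating $\phi_i(\bm{x}) = \exp(-\gamma r)$ twice gives
\begin{align*}
  \nabla \phi_i(\bm{x}) &= -2\gamma\,\phi_i(\bm{x})\,\bm{u}, \\
  \nabla^2 \phi_i(\bm{x}) &= \phi_i(\bm{x})\bigl(-2\gamma I + 4\gamma^2 \bm{u}\bm{u}^{\top}\bigr).
\end{align*}
So the Hessian is a scalar multiple of the identity plus a rank-one term, as announced in the introduction.

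Its eigenvalues can be written down by the Rayleigh quotient. For a unit vector $\bm{z}$, the quotient is $\phi_i(\bm{x})\bigl(-2\gamma + 4\gamma^2 \langle \bm{u},\bm{z}\rangle^2\bigr)$, and $0 \le \langle \bm{u},\bm{z}\rangle^2 \le r$.
\begin{itemize}
  \item Taking $\bm{z}$ orthogonal to $\bm{u}$ (when $d\ge 2$) gives the eigenvalue $-2\gamma\phi_i(\bm{x})$ with multiplicity $d-1$.
  \item Taking $\bm{z} = \bm{u}/\lVert\bm{u}\rVert$ (or any unit vector if $\bm{u}=\bm{0}$) gives $\phi_i(\bm{x})(-2\gamma+4\gamma^2 r)$.
\end{itemize}
Hence
\begin{align*}
  \lambda_{\max}(\nabla^2\phi_i(\bm{x})) &\le e^{-\gamma r}\max\{-2\gamma,\ 4\gamma^2 r - 2\gamma\},\\
  \lambda_{\min}(\nabla^2\phi_i(\bm{x})) &\ge e^{-\gamma r}(-2\gamma).
\end{align*}

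The lower bound follows at once, since $0< e^{-\gamma r}\le 1$ gives $-2\gamma e^{-\gamma r} \ge -2\gamma$.

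For the upper bound, substitute $s=\gamma r\ge 0$. It suffices to bound $\psi(s) = 2\gamma(2s-1)e^{-s}$, because when $2s-1<0$ the value $\psi(s)$ is negative and so below the claimed positive bound. We have $\psi'(s) = 2\gamma(3-2s)e^{-s}$, so $\psi$ increases on $[0,\tfrac32]$ and decreases afterwards. The maximum is therefore $\psi(\tfrac32) = 2\gamma\cdot 2\, e^{-3/2} = 4\gamma e^{-3/2}$, uniformly in $\bm{x}$.

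The only step needing care is this one-variable maximization: confirming that $s=\tfrac32$ is the global maximizer on $[0,\infty)$, and that the branch $-2\gamma e^{-s}$ never exceeds it. Both are elementary once the factor $\gamma$ is scaled out through the substitution $s=\gamma r$.
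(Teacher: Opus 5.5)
Your proposal is correct and follows the paper's proof essentially step for step: the same identity-plus-rank-one Hessian $\phi_i(\bm{x})\bigl(4\gamma^2\bm{u}\bm{u}^{\top}-2\gamma I\bigr)$, the same Rayleigh-quotient bounds $0\le\langle\bm{u},\bm{z}\rangle^2\le\lVert\bm{u}\rVert^2$, and the same one-variable maximization with maximizer $s=\tfrac32$ (your substitution $s=\gamma r$ just rescales the paper's $\psi(u)$, whose maximizer is $u=\tfrac{3}{2\gamma}$). The explicit eigenvalue listing and the $\max\{-2\gamma,\,4\gamma^2 r-2\gamma\}$ are harmless extras, since the second entry always dominates because $r\ge 0$.
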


\begin{proof}
The Hessian of the RBF kernel $\phi_i$ is given by
\begin{align*}
  \nabla^2\phi_i(\bm{x}) = 4\gamma^2\phi_i(\bm{x})\bm{r}\bm{r}^{\top} - 2\gamma\phi_i(\bm{x})I,
\end{align*}
where $\bm{r} = \bm{x} - \bm{x}_i$.
For any unit vector $\bm{v} \in \mathbb{R}^d$, the Rayleigh quotient is
\begin{align*}
  \bm{v}^{\top}\nabla^2\phi_i(\bm{x})\bm{v}
    = 4\gamma^2\phi_i(\bm{x})\langle \bm{r}, \bm{v} \rangle^2 - 2\gamma\phi_i(\bm{x}).
\end{align*}
From $\lVert\bm{v}\rVert = 1$ and the Cauchy-Schwarz inequality, we have
\begin{align}
  0 \leq \langle \bm{r}, \bm{v} \rangle^2 \leq \lVert\bm{r}\rVert^2. \label{eq:rtv_bounds}
\end{align}
Using~\cref{eq:rtv_bounds}, we will show~\cref{lem:kernel_eig}.

We first show the upper bound on the maximum eigenvalue.
Since $\phi_i(\bm{x}) > 0$, applying the upper bound in~\cref{eq:rtv_bounds} yields
\begin{align*}
  \bm{v}^{\top}\nabla^2\phi_i(\bm{x})\bm{v} \leq \psi(\lVert\bm{r}\rVert^2),
  \quad \text{where} \quad
  \psi(u) = \exp(-\gamma u)(4\gamma^2 u - 2\gamma).
\end{align*}
Since $\psi'(u) = 2\gamma^2(3 - 2\gamma u)\exp(-\gamma u)$, the function $\psi$ attains a local maximum value $\psi(\tfrac{3}{2\gamma}) = 4\gamma\exp(-\tfrac{3}{2})$ at $u = \tfrac{3}{2\gamma}$, which is the maximum on $u \geq 0$.
Therefore, $\bm{v}^{\top}\nabla^2\phi_i(\bm{x})\bm{v} \leq 4\gamma\exp(-\tfrac{3}{2})$ holds for any $\bm{x}$ and any unit vector $\bm{v}$, and by the variational characterization of the maximum eigenvalue,
\begin{align*}
  \lambda_{\max}(\nabla^2\phi_i(\bm{x})) \leq 4\gamma\exp\!\left(-\tfrac{3}{2}\right)
\end{align*}
follows.

Next, we show the lower bound on the minimum eigenvalue.
From the lower bound $\langle \bm{r}, \bm{v} \rangle^2 \geq 0$ in~\cref{eq:rtv_bounds} and $0 < \phi_i(\bm{x}) \leq 1$,
\begin{align*}
  \bm{v}^{\top}\nabla^2\phi_i(\bm{x})\bm{v} \geq -2\gamma\phi_i(\bm{x}) \geq -2\gamma
\end{align*}
holds for any $\bm{x}$ and any unit vector $\bm{v}$.
By the variational characterization of the minimum eigenvalue,
\begin{align*}
  \lambda_{\min}(\nabla^2\phi_i(\bm{x})) \geq -2\gamma
\end{align*}
follows.
\end{proof}

\bibliographystyle{plain}
\bibliography{refs}

\end{document}